\documentclass{article}

\PassOptionsToPackage{numbers, compress}{natbib}

\usepackage[main, final]{neurips_2026}

\usepackage[utf8]{inputenc} % allow utf-8 input
\usepackage[T1]{fontenc}    % use 8-bit T1 fonts
\usepackage{hyperref}       % hyperlinks
\usepackage{url}            % simple URL typesetting
\usepackage{booktabs}       % professional-quality tables
\usepackage{amsfonts}       % blackboard math symbols
\usepackage{nicefrac}       % compact symbols for 1/2, etc.
\usepackage{microtype}      % microtypography
\usepackage{xcolor}  
\usepackage{amsmath}
\usepackage{graphicx}
\usepackage{multirow}
\usepackage{caption}
\usepackage{amsthm}
\usepackage{graphicx}
\usepackage{wrapfig}
\usepackage{algorithm}
\usepackage{algpseudocode}
\usepackage{titletoc}
\newtheorem{theorem}{Theorem}[section]

\newtheorem{proposition}[theorem]{Proposition}

\usepackage{algorithm}
\usepackage{algpseudocode}

\title{VTV-FM: Flow Matching through Variational Terminal-Velocity Closure}
\author{%
\begin{tabular}{ccc}
\textbf{Haoyang Jiang}$^{*}$ &
\textbf{Yuheng Li}$^{*}$ &
\textbf{Di Yang}$^{*}$ \\[3pt]
Yanhai Xiong &
Haipeng Chen$^{\dagger}$ &
Yi He$^{\dagger}$
\end{tabular}
\\[7pt]
Department of Data Science, William \& Mary\\
Williamsburg, VA 23185, USA\\[4pt]
{\small
\texttt{\{hjiang16,yli95,dyang06,yxiong05,hchen23,yihe\}@wm.edu}
}\\[3pt]
{\small
$^{*}$Equal contribution. \qquad
$^{\dagger}$Corresponding authors.
}
}

\begin{document}

\maketitle

\begin{abstract}
Flow matching (FM) learns generative transport by fitting continuous-time motion from a simple source distribution to the data distribution. 
Most existing methods use first-order bridges: once a source and a target sample are paired, the path is a straight motion with constant velocity. 
FM with optimal transport (OT) improves the pairing, but the bridge itself remains linear, limiting its ability to model curved motion, acceleration, and changing directions. 
A natural remedy is to use second-order phase-space dynamics; however, learning the bridge requires target-side terminal-velocity
information that static datasets do not provide.
We propose \textbf{V}ariational \textbf{T}erminal-\textbf{V}elocity \textbf{F}low \textbf{M}atching (\textbf{VTV-FM}), a second-order FM framework that derives the missing velocity by minimizing acceleration energy, yielding a closed-form closure for static data. 
The same minimum-acceleration variational construction also defines the OT pairing cost and the acceleration targets used for training.
% When terminal velocity is available, the method directly incorporates it through the full second-order bridge. 
%When terminal velocity is available, as in PDE data or temporal sequences, the method directly incorporates it through the full second-order bridge. 
Experiments on low-dimensional datasets, PDE-governed physical fields, and CIFAR-10 show that VTV-FM improves transport geometry and generation quality over first-order and high-order FM baselines.
Our code is available at \url{https://github.com/HaoyangJiang-WM/VTV-FM}.
\end{abstract}

\vspace{-0.55em}
\section{Introduction}
\vspace{-0.5em}
\label{sec:introduction}

Flow Matching (FM) has emerged as a flexible generative model by learning continuous-time vector fields along prescribed probability paths~\citep{lipman2023flowmatching,albergo2023stochastic}. 
A common choice is the first-order linear bridge: given a paired source sample and target sample, the model learns a straight path from \(x_0\) to \(x_1\), corresponding to constant-velocity transport~\citep{lipman2023flowmatching}. 
OT-based Flow Matching further improves this framework by replacing independent source--target pairing with an optimal coupling, thereby reducing transport cost and vector-field conflicts~\citep{tong2024minibatchot,pooladian2023multisample}. 
However, even after the pairing is improved, the conditional bridge itself remains a first-order linear interpolant. 
For data distributions with curvature, branching structures, or separated modes, straight paths may pass through low-density regions and require the model to learn vector fields in poorly supported intermediate regions. 
Meanwhile, trajectory intersections can induce conflicting velocity supervision, forcing the learned vector field to average incompatible directions~\citep{liu2023flowstraightfast}. 
% This motivates a second-order phase-space formulation, where transport trajectories can bend, accelerate, and change direction rather than being restricted to linear motion.
This motivates a second-order phase-space formulation, where transport
can exhibit nonuniform speed and curved position paths with time-varying
tangent directions, rather than being restricted to constant-velocity
linear interpolation.

Constructing a second-order bridge is fundamentally more challenging than constructing a first-order path. 
A first-order bridge only requires endpoint positions, whereas a second-order phase-space bridge also requires endpoint velocities, since these boundary velocities determine the intermediate trajectory and the induced acceleration supervision. 
The initial velocity \(v_0\) can be sampled from a source phase-space prior, but static datasets typically provide only the terminal position \(x_1\), leaving the terminal velocity \(v_1\) unspecified. 
Related high-order and acceleration-aware FM methods enrich path geometry
through higher-order supervision, learned path constructions, boundary
constraints, or acceleration-based refinement mechanisms~
\citep{chen2025highordermatching,chen2025nrflow,
su2025highorderflowmatching,luo2025curveflow,
hu2025boundaryconditions,yue2025oatfm}.

VTV-FM instead formulates the static-data case by treating \(v_1\) as a
latent boundary variable and eliminating it variationally, tying the
terminal-velocity completion to the bridge geometry and pairing cost.
This issue is critical because \(v_1\) controls the full phase-space bridge. 
Externally specified terminal velocities can therefore introduce unnecessary curvature or acceleration, biasing the regression targets. 
Figure~\ref{fig:v1_closure} illustrates this sensitivity: under the same observable boundary information \((x_0,v_0,x_1)\), different choices of \(v_1\) induce different position trajectories, velocity profiles, and acceleration-energy costs. 
Since deterministic FM sampling relies on numerical integration of the learned dynamics at inference time, such trajectory distortions can make discretization errors accumulate more easily and increase sensitivity to solver choice and time discretization~\citep{liu2023flowstraightfast,shaul2024bespoke}.

\begin{figure}[t]
    \centering
\includegraphics[width=0.82\linewidth]{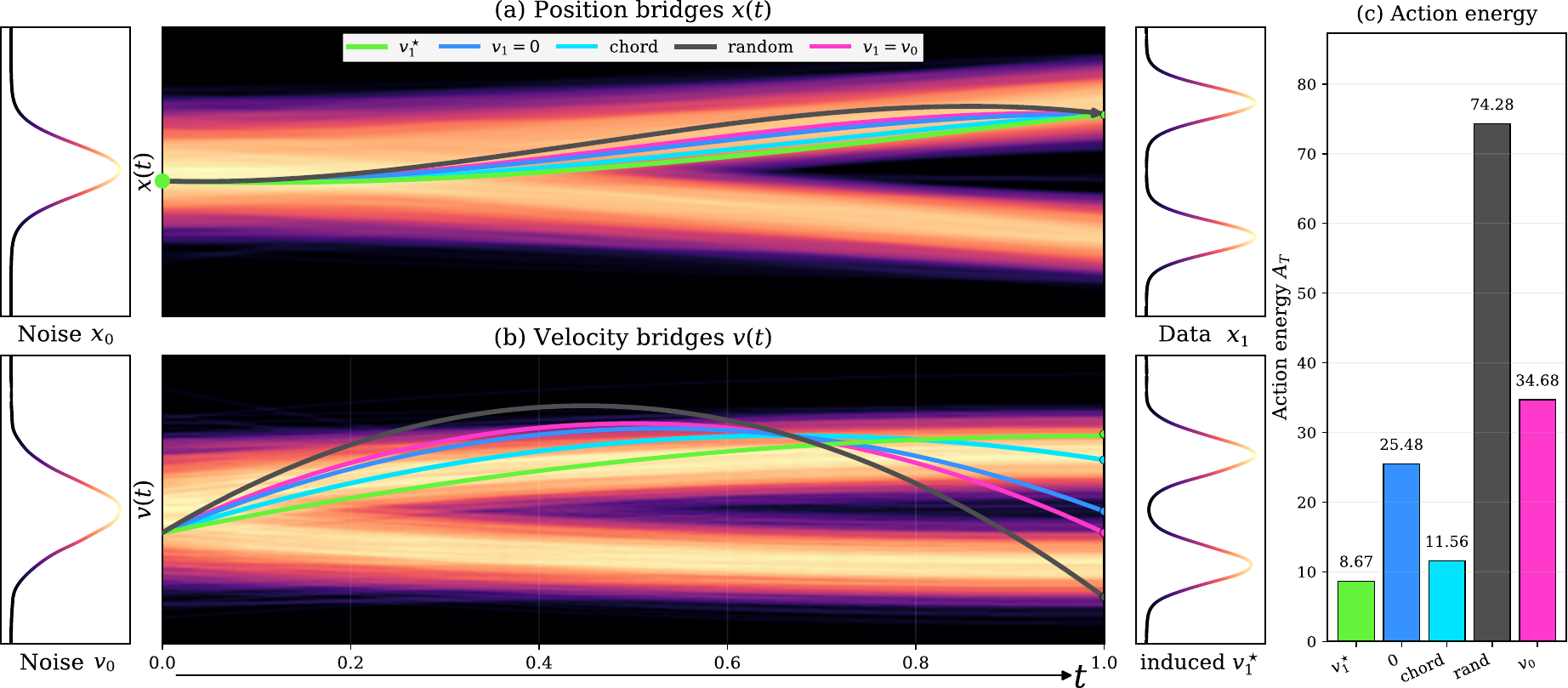}
\caption{
Effect of terminal-velocity choices on second-order bridge supervision.
All curves in (a,b) use the same observable boundary information \((x_0,v_0,x_1)\), but different terminal velocities \(v_1\).
In (a,b), the horizontal axis is normalized time \(t/T\), and colors denote the same terminal-velocity choices: \(v_1^\star\), \(v_1=0\), chord velocity \((x_1-x_0)/T\), random velocity, and \(v_1=v_0\).
\textbf{(a)} The position bridges show the evolution of \(x(t)\) under different terminal-velocity choices.
\textbf{(b)} The velocity bridges show the corresponding evolution of \(v(t)\).
\textbf{(c)} The bars show the acceleration-energy cost for each terminal-velocity choice.
Lower cost corresponds to smoother supervision, while externally specified terminal velocities may induce biased supervision.
}
    \vspace{-2.0em}
    \label{fig:v1_closure}
\end{figure}

To address these structural limitations, we propose \textbf{V}ariational \textbf{T}erminal-\textbf{V}elocity \textbf{F}low \textbf{M}atching (\textbf{VTV-FM}). 
Its core is a minimum-acceleration variational principle: the same variational problem simultaneously induces the second-order bridge, the terminal-velocity closure, and the OT pairing cost. 
For a complete boundary state \((x_0,v_0,x_1,v_1)\), the minimum-acceleration problem yields a cubic Hermite bridge, together with the corresponding intermediate velocity and acceleration supervision. 
The optimal action value of this trajectory further defines a full-boundary second-order transport cost.

In static-data regimes where the terminal velocity is missing, VTV-FM does not introduce an auxiliary estimator or artificial constraint. 
Instead, it treats \(v_1\) as a latent boundary variable and directly minimizes the same full-boundary cost with respect to \(v_1\). 
This optimization has a closed-form solution, yielding the variational terminal-velocity closure \(v_1^\star\). 
Substituting \(v_1^\star\) back into the original cost gives a projected OT cost that depends only on the observable boundary variables \((x_0,v_0,x_1)\). 
In this way, the missing \(v_1\) in static data is determined intrinsically by the same minimum-acceleration principle, rather than being specified by an external mechanism. 
When the terminal velocity is observed or derivable, as in PDE data or temporal sequences, VTV-FM directly uses the full-boundary bridge without applying the closure. 
Thus, the terminal-velocity closure, the OT pairing cost, and the acceleration supervision induced by the second-order bridge are all determined by the same minimum-acceleration variational principle, rather than being specified separately by external rules.

Our contributions include: 
{(1)} a phase-space formulation of FM that identifies missing terminal velocity as the key obstacle for second-order bridges on static data; 
{(2)} together with the proposed VTV-FM framework, a closed-form terminal-velocity closure derived from a minimum-acceleration principle, which also defines the bridge and projected OT pairing cost; and 
{(3)} empirical validation on low-dimensional transport, PDE-governed fields, and CIFAR-10, where VTV-FM improves over first-order, OT-based, and high-order FM baselines.

\vspace{-0.25em}
\section{Preliminaries}
\label{sec:preliminaries}
\vspace{-0.25em}
\textbf{First-order FM and OT pairing.}
Let \(\rho_0\) and \(\rho_1\) denote the source and target distributions in position space. 
Given \(x_0\sim\rho_0\) and \(x_1\sim\rho_1\), standard FM constructs the linear bridge \(x_\tau=(1-\tau)x_0+\tau x_1\), \(\tau\in[0,1]\), with conditional velocity \(u_\tau=x_1-x_0\). 
The velocity field is trained by \(\mathcal L_{\mathrm{FM}}(\theta)=\mathbb E\|v_\theta(x_\tau,\tau)-u_\tau\|_2^2\).
OT-CFM ~\citep{tong2024minibatchot} replaces the independent source--target pairing with an OT coupling. 
Given minibatch samples, it constructs a cost matrix using the quadratic cost \(c_{\mathrm{OT}}(x_0,x_1)=\|x_1-x_0\|_2^2\), and approximately solves the coupling by entropic OT / Sinkhorn iterations~\citep{cuturi2013sinkhorn,tong2024minibatchot}. 
From a variational view, this cost corresponds to the minimum value of the first-order kinetic action \(\int_0^1\|\dot x_t\|_2^2dt\) between fixed endpoints, whose minimizer is a constant-velocity straight line. 
Thus, OT-CFM improves endpoint pairing, but the bridge remains linear and first-order.

\textbf{Second-order phase-space dynamics.}
To construct second-order transport, we lift the state to phase space, \(z_t=(x_t,v_t) \in\mathcal Z:=\mathcal X\times\mathcal V\), where \(\mathcal X=\mathbb R^d\) and 
\(\mathcal V=\mathbb R^d\) are the state and velocity space, respectively. We consider second-order dynamics \(v_t = \dot x_t\), \(a_t = \dot v_t = \ddot x_t\). 
Unlike a first-order bridge, which only requires endpoint positions \((x_0,x_1)\), a second-order bridge typically requires the full phase-space boundary state \((x_0,v_0,x_1,v_1)\). This phase-space formulation can be viewed as a second-order analogue of first-order OT: we use the acceleration action \(\mathcal A_T[x]=\int_0^T\|\ddot x_t\|_2^2dt\) to induce a second-order bridge and the corresponding endpoint cost. 
Given this cost, the pairing step remains a discrete OT problem and can be solved with minibatch Sinkhorn as in OT-CFM. The key difference is that static datasets usually provide only the terminal position \(x_1\), but not the terminal velocity \(v_1\). 
Therefore, determining the missing \(v_1\) is the central obstacle in extending FM to second-order phase-space bridges. 
The next section shows how the acceleration action yields the bridge, terminal-velocity closure, and OT pairing cost.

\section{Proposed Method: VTV-FM}
\label{sec:method}
% \vspace{-0.25em}

We present VTV-FM, a second-order Flow Matching framework in phase space. Its central idea is to derive the transport bridge, terminal-velocity closure, and OT pairing cost from a single minimum-acceleration variational principle, rather than specifying them as separate design choices. This unified construction allows VTV-FM to use the full-boundary bridge when terminal velocities are available, and to close the missing terminal velocity variationally when they are not. We first derive the closed-form full-boundary bridge and cost for
\((x_0,v_0,x_1,v_1)\) (Section~\ref{sec:full_boundary_bridge}). We then project this cost onto \((x_0,v_0,x_1)\), obtaining a closed-form terminal-velocity closure and projected OT cost for the latent-\(v_1\) regime
(Section~\ref{sec:terminal_velocity_projection}). Finally, we use these quantities for minibatch pairing, trajectory supervision, acceleration matching, and sampling (Sections~\ref{sec:minibatch_pairing}--\ref{sec:acceleration_matching}).

\subsection{Full-Boundary Minimum-Acceleration Bridge}
\label{sec:full_boundary_bridge}
\vspace{-0.25em}
We consider the idealized full-boundary setting, where a complete boundary state
\((x_0,v_0,x_1,v_1)\) and a transport horizon \(T>0\) are given. Let the set of admissible trajectories be
\[
\Gamma_T(x_0,v_0,x_1,v_1)
=
\left\{
\gamma\in H^2([0,T];\mathbb R^d):
\gamma(0)=x_0,\ 
\dot\gamma(0)=v_0,\ 
\gamma(T)=x_1,\ 
\dot\gamma(T)=v_1
\right\},
\]
which satisfy the full phase-space boundary
conditions. Following the second-order OT viewpoint, each trajectory is measured
by the squared-acceleration action
\(\mathcal A_T[\gamma]=\int_0^T\|\ddot\gamma(t)\|_2^2\,dt\). The full-boundary transport cost is the minimum action value, and the
corresponding variational bridge is any action minimizer, which are defined as follows:
\begin{subequations}\label{eqn_var_cost_bridge}
\begin{align}
C_T(x_0,v_0,x_1,v_1)
&=
\inf_{\gamma\in\Gamma_T(x_0,v_0,x_1,v_1)}
\mathcal A_T[\gamma],
\label{eqn_var_cost}
\\
\gamma_T^\star(x_0,v_0,x_1,v_1)
&\in
\arg\min_{\gamma\in\Gamma_T(x_0,v_0,x_1,v_1)}
\mathcal A_T[\gamma].
\label{eqn_var_bridge}
\end{align}
\end{subequations}
The bridge induces the phase-space trajectory and acceleration target:
\[
x_t^\star=\gamma_T^\star(t),\qquad
v_t^\star=\dot\gamma_T^\star(t),\qquad
a_t^\star=\ddot\gamma_T^\star(t).
\]
Equations (\ref{eqn_var_cost}) and (\ref{eqn_var_bridge}) establish the basic variational objective used by VTV-FM. 
For a complete boundary state, the endpoint cost, intermediate trajectory, and acceleration target are all induced by the same minimum-acceleration principle. While these two equations specify the bridge and cost implicitly, Proposition \ref{prop:full_boundary_bridge} provides their closed-form solutions as functions of the boundary states $(x_0, v_0, x_1, v_1)$ and shows that the cubic Hermite bridge is not an additional path assumption, but the unique solution of the full-boundary minimum-acceleration problem.
This cubic Hermite construction is classical in spline theory~\citep{deboor1978splines}.
% \subsection{Closed-Form Bridge and Full-Boundary Cost}
% \label{sec:closed_form_bridge}

%We now characterize the variational bridge in closed form. 
% The result shows that the cubic Hermite bridge is not an additional path assumption, but the unique solution of the full-boundary minimum-acceleration problem.

\begin{proposition}[Full-boundary minimum-acceleration bridge]
\label{prop:full_boundary_bridge}
For any \(T>0\) and complete boundary state \((x_0,v_0,x_1,v_1)\), the variational bridge in Eq.~(\ref{eqn_var_bridge}) exists and is unique. 
Let \(\tau=t/T\). The unique minimizer is characterized by the cubic Hermite curve
\begin{equation}\label{eqn_hermite_bridge_x}
    x_t^\star =
h_{00}(\tau)x_0
+
h_{10}(\tau)T v_0
+
h_{01}(\tau)x_1
+
h_{11}(\tau)T v_1,
\end{equation}

where
\[
h_{00}(\tau)=2\tau^3-3\tau^2+1,\qquad
h_{10}(\tau)=\tau^3-2\tau^2+\tau,
\]
\[
h_{01}(\tau)=-2\tau^3+3\tau^2,\qquad
h_{11}(\tau)=\tau^3-\tau^2.
\]
The induced velocity and acceleration are
\begin{equation}\label{eqn_hermite_bridge_v}
    \begin{aligned}
        v_t^\star =
\frac{1}{T}
\left[
h_{00}'(\tau)x_0
+
h_{10}'(\tau)T v_0
+
h_{01}'(\tau)x_1
+
h_{11}'(\tau)T v_1
\right], \\
a_t^\star
=
\frac{1}{T^2}
\left[
h_{00}''(\tau)x_0
+
h_{10}''(\tau)T v_0
+
h_{01}''(\tau)x_1
+
h_{11}''(\tau)T v_1
\right].
    \end{aligned}
\end{equation}
Moreover, the full-boundary cost in Eq. (\ref{eqn_var_cost}) admits the closed form
\begin{equation}\label{eqn_close_cost}
    C_T(x_0,v_0,x_1,v_1)
=
\frac{1}{T}
\left[
12
\left\|
\frac{x_1-x_0}{T}
-
\frac{v_0+v_1}{2}
\right\|_2^2
+
\|v_1-v_0\|_2^2
\right].
\end{equation}
\end{proposition}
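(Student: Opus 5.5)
The plan is to separate the problem into a quadratic minimization over an affine subspace of $H^2$, prove existence and uniqueness by an orthogonality argument, and then compute the minimizer and the cost explicitly. The action $\mathcal A_T$ decouples across the $d$ coordinates, as do the boundary conditions, so it suffices to treat $d=1$ and sum; the norms in Eq.~(\ref{eqn_close_cost}) then appear by additivity.

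\textbf{Step 1 (variational characterization).} Let $p$ be the cubic Hermite curve in Eq.~(\ref{eqn_hermite_bridge_x}). Using the standard boundary values $h_{00}(0)=1$, $h_{01}(1)=1$, $h_{10}'(0)=1$, $h_{11}'(1)=1$, with all other endpoint values and derivatives zero, we check that $p\in\Gamma_T$. The chain rule with $\tau=t/T$ gives the factors $1/T$ and $1/T^2$ in Eq.~(\ref{eqn_hermite_bridge_v}).

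Any other admissible $\gamma$ can be written as $\gamma=p+\eta$ with $\eta\in H^2$ and $\eta(0)=\dot\eta(0)=\eta(T)=\dot\eta(T)=0$. Expanding the action gives
\[
\mathcal A_T[\gamma]=\mathcal A_T[p]+2\int_0^T \ddot p\,\ddot\eta\,dt+\int_0^T\ddot\eta^2\,dt .
\]
We integrate the cross term by parts twice. Because $p$ is cubic, $p''''\equiv 0$, and the boundary terms $\ddot p\,\dot\eta\big|_0^T$ and $\dddot p\,\eta\big|_0^T$ vanish by the conditions on $\eta$. Hence the cross term is zero.

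This gives $\mathcal A_T[\gamma]\ge \mathcal A_T[p]$, with equality iff $\ddot\eta\equiv0$. In that case $\eta$ is affine, and $\eta(0)=\dot\eta(0)=0$ forces $\eta\equiv0$. This establishes existence and uniqueness of the minimizer, and identifies it as $p$, all at once. No direct-method compactness argument is needed. The only technical point to state is that integration by parts is valid for $\eta\in H^2$, since $\eta$ and $\dot\eta$ are absolutely continuous.

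\textbf{Step 2 (closed-form cost).} This is the main computational step. Since $\ddot p$ is affine in $t$, it is convenient to write it in a centered form
\[
\ddot p(t)=\alpha+\beta\,(t-T/2).
\]
We determine $\beta=\dddot p$ and $\alpha=\ddot p(T/2)$ from the boundary data. Integrating $\ddot p$ over $[0,T]$ gives $\alpha T=v_1-v_0$. Integrating $\dot p$ gives
\[
x_1-x_0=v_0T+\alpha T^2/2-\beta T^3/12,
\]
which rearranges to
\[
\beta=\frac{12}{T^2}\left(\frac{v_0+v_1}{2}-\frac{x_1-x_0}{T}\right).
\]
Orthogonality of $1$ and $t-T/2$ on $[0,T]$ then yields
\[
\int_0^T\ddot p^2\,dt=\alpha^2T+\beta^2T^3/12 .
\]
Substituting $\alpha$ and $\beta$ gives
\[
\frac{(v_1-v_0)^2}{T}+\frac{12}{T}\left(\frac{x_1-x_0}{T}-\frac{v_0+v_1}{2}\right)^2,
\]
which is exactly Eq.~(\ref{eqn_close_cost}) after summing over coordinates.

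The main obstacle is bookkeeping in this computation, namely the signs and $T$-powers. Using the centered-basis decomposition avoids expanding the Hermite second derivatives directly. As a sanity check, $v_0=v_1=(x_1-x_0)/T$ should give zero cost, corresponding to a straight line.
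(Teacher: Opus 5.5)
Your proposal is correct and follows the paper's proof: the paper also checks that the explicit cubic is admissible, removes the cross term $\int_0^T\langle\ddot\gamma^\star,\ddot r\rangle\,dt$ by integrating by parts twice (using $(\gamma^\star)^{(4)}=0$ and the homogeneous boundary conditions on $r$), and obtains existence and uniqueness at once from $\mathcal A_T[\gamma]=\mathcal A_T[\gamma^\star]+\int_0^T\|\ddot r\|_2^2\,dt$. The only difference is the cost computation: the paper expands $\ddot\gamma^\star=6\alpha t+2\beta$ in the monomial basis and completes the square, whereas your centered form $\ddot p=\alpha+\beta(t-T/2)$, with coefficients fixed by integrating $\ddot p$ and $\dot p$ over $[0,T]$, gives the two orthogonal squared terms of the closed-form cost directly; this is cleaner bookkeeping for the same calculation.
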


The proof is given in Appendix~\ref{app:full_boundary_bridge}. 
Proposition~\ref{prop:full_boundary_bridge} implies that, once the complete boundary state is specified, VTV-FM obtains both the second-order trajectory and the endpoint cost from a single variational principle. 
In the observed-\(v_1\) regime, this full-boundary bridge can be used directly for pairing and acceleration supervision. 
In the latent-\(v_1\) regime, the same cost \(C_T\) provides the starting point for eliminating the unobserved terminal velocity.

\subsection{Terminal-Velocity Closure as Variational Projection}
\label{sec:terminal_velocity_projection}
% \vspace{-0.25em}
In settings where only the observable boundary tuple \((x_0,v_0,x_1)\) is observed,
the full-boundary bridge in Proposition~\ref{prop:full_boundary_bridge} cannot be
instantiated directly, since it depends on the terminal velocity \(v_1\).
We determine this unobserved terminal velocity by projecting the full-boundary cost onto the observable state space. Rather than relying on a heuristic, we minimize \(C_T(x_0,v_0,x_1,v_1)\) with respect to \(v_1\). The following theorem shows that minimizing \(C_T\) over the latent terminal
velocity yields a closed-form terminal-velocity closure \(v_1^\star\), and the
associated projected cost \(C_T^\star\).

% While this full-boundary bridge can be used directly for temporal data where the terminal velocity \(v_1\) is observed, static datasets only provide the terminal position \(x_1\). To address the missing terminal velocity in static data, we treat \(v_1\) as a latent variable. Rather than prescribing it via a hand-designed heuristic, we determine \(v_1\) by projecting the full-boundary variational cost onto the observable state space.

\begin{theorem}[Variational terminal-velocity projection]
\label{thm:terminal_velocity_projection}
For any \(T>0\) and observable boundary tuple \((x_0,v_0,x_1)\), the minimizer of the full-boundary cost over the latent terminal velocity is
\begin{equation}\label{eqn_lat_terminal_v}
    v_1^\star(x_0,v_0,x_1) =
\arg\min_{v_1\in\mathcal V}
C_T(x_0,v_0,x_1,v_1)
= \frac{3}{2T}(x_1-x_0)-\frac{1}{2}v_0,
\end{equation}
and the corresponding projected transport cost is
\begin{equation}\label{eqn_proj_cost}
    C_T^\star(x_0,v_0,x_1)
=
\inf_{v_1\in\mathcal V}
C_T(x_0,v_0,x_1,v_1)
=
\frac{3}{T}
\left\|
\frac{x_1-x_0}{T}-v_0
\right\|_2^2 .
\end{equation}

Moreover, this projection is exact: minimizing the projected cost over observable couplings is equivalent to minimizing the full-boundary cost over all lifted couplings with latent terminal velocities.
\end{theorem}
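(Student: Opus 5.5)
We take the closed-form full-boundary cost of Proposition~\ref{prop:full_boundary_bridge}, Eq.~(\ref{eqn_close_cost}), as the starting point and minimize it directly over \(v_1\in\mathcal V=\mathbb R^d\). Write \(\Delta:=(x_1-x_0)/T\) and \(w:=v_1-v_0\). Then
\[
\frac{x_1-x_0}{T}-\frac{v_0+v_1}{2}=(\Delta-v_0)-\tfrac12 w ,
\]
so with \(e:=\Delta-v_0\) the cost becomes
\[
T\,C_T=12\|e-\tfrac12 w\|_2^2+\|w\|_2^2=12\|e\|_2^2-12\langle e,w\rangle+4\|w\|_2^2 .
\]
This is a strictly convex quadratic in \(w\) (Hessian \(8I\succ0\)) and coercive, so a minimizer exists and is unique. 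The first-order condition \(-12e+8w=0\) gives \(w^\star=\tfrac32 e\). Therefore
\[
v_1^\star=v_0+\tfrac32(\Delta-v_0)=\tfrac{3}{2T}(x_1-x_0)-\tfrac12 v_0,
\]
which is Eq.~(\ref{eqn_lat_terminal_v}). Substituting back yields \(12\|e\|^2-18\|e\|^2+9\|e\|^2=3\|e\|^2\), hence \(C_T^\star=\tfrac3T\|\Delta-v_0\|_2^2\), which is Eq.~(\ref{eqn_proj_cost}). An equivalent route, useful as a check, is to complete the square:
\[
T\,C_T=4\|w-\tfrac32 e\|_2^2+3\|e\|_2^2 .
\]
This exhibits both the minimizer and the minimum value at once.

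For the exactness claim, we formalize the lifted problem. Let \(\mu\) be a coupling of the observable marginals on \((x_0,v_0)\) and \(x_1\). A lifted coupling \(\pi\) on \((x_0,v_0,x_1,v_1)\) is one whose \((x_0,v_0,x_1)\)-marginal is such a \(\mu\), with \(v_1\) unconstrained. We then prove
\[
\inf_{\mu}\int C_T^\star\,d\mu=\inf_{\pi}\int C_T\,d\pi .
\]

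\begin{itemize}
\item[\(\ge\)] For any \(\pi\) with marginal \(\mu\), the pointwise bound \(C_T\ge C_T^\star\) gives \(\int C_T\,d\pi\ge\int C_T^\star\,d\mu\).
\item[\(\le\)] Given \(\mu\), define \(\pi\) as the pushforward of \(\mu\) under \((x_0,v_0,x_1)\mapsto(x_0,v_0,x_1,v_1^\star(x_0,v_0,x_1))\). This map is affine, hence measurable. The pushforward \(\pi\) has marginal \(\mu\) and attains \(\int C_T\,d\pi=\int C_T^\star\,d\mu\).
\end{itemize}

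Consequently the optimal values coincide. Optimizers also correspond: if \(\mu^\star\) is optimal for the projected problem, its lift is optimal for the full problem. Conversely, an optimal \(\pi^\star\) must satisfy \(v_1=v_1^\star\) \(\pi^\star\)-a.s., by uniqueness of the pointwise minimizer together with the equality case of \(C_T\ge C_T^\star\), and its marginal is optimal for the projected problem.

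The algebra above is routine. The main point requiring care is stating the exactness claim precisely, that is, which marginals are fixed and that \(v_1\) is free. One must also handle the case where the infima are infinite or are not attained. We avoid attainment issues by proving equality of infima through the pointwise inequality and the measurable lift, which requires no compactness. Attainment can then be added under finite second moments, where \(C_T^\star\) is continuous and integrable and standard OT existence applies. The same argument applies verbatim to the discrete minibatch setting.
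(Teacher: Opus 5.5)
Your proposal is correct and follows the paper's proof. The paper completes the square to get $C_T=\frac{3}{T^3}\|x_1-x_0-Tv_0\|_2^2+\frac{4}{T}\|v_1-v_1^\star\|_2^2$, which is your identity $TC_T=4\|w-\frac32 e\|_2^2+3\|e\|_2^2$. It then proves exactness as you do, via the pointwise bound $C_T\ge C_T^\star$ on any lift plus equality for the deterministic lift $(\mathrm{id},v_1^\star)_\#\eta$. One small point: your added converse (every optimal lift has $v_1=v_1^\star$ almost surely) needs the common optimal value to be finite, which goes beyond what the theorem states anyway.
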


The proof is given in Appendix~\ref{app:terminal_velocity_projection}. Theorem~\ref{thm:terminal_velocity_projection} provides the variational projection of the full-boundary cost onto the observable boundary variables. In this construction, the missing terminal velocity is not supplied by an auxiliary model or an external heuristic; instead, it is the analytic optimizer of the same minimum-acceleration principle that defines the bridge and the endpoint cost. In the latent-\(v_1\) regime, VTV-FM therefore
uses the completed boundary state
\((x_0,v_0,x_1,v_1^\star(x_0,v_0,x_1))\) to instantiate the full-boundary bridge
in Proposition~\ref{prop:full_boundary_bridge}, yielding the trajectory
\(x_t^\star\), velocity \(v_t^\star\), and acceleration target \(a_t^\star\).
% Theorem~\ref{thm:terminal_velocity_projection} shows that the missing terminal velocity is not supplied by an auxiliary model or an external heuristic. 
% Instead, it is the analytic optimizer of the same minimum-acceleration principle that defines the bridge and the endpoint cost. 

Consequently, both regimes use the same full-boundary bridge construction. When \(v_1\) is observed, VTV-FM uses the full-boundary cost \(C_T\) for pairing and the observed \(v_1\) for bridge instantiation. When \(v_1\) is latent, it
uses the projected cost \(C_T^\star\) for pairing and the variational closure \(v_1^\star\) for bridge instantiation. The next section describes how \(C_T\) and \(C_T^\star\) are used to construct the minibatch OT cost matrix and generate trajectory-level supervision.

%The resulting costs are instantiated on minibatches in Section~\ref{sec:minibatch_pairing}.

\subsection{Mini-batch Pairing and Trajectory Supervision}
\label{sec:minibatch_pairing}
% \vspace{-0.25em}
% Theorem~\ref{thm:terminal_velocity_projection} guarantees that the projected cost \(C_T^\star\) in Eq. (\ref{eqn_proj_cost}) yields the exact phase-space optimal transport coupling even when terminal velocities are missing. 

Theorem~\ref{thm:terminal_velocity_projection} justifies using \(C_T^\star\) in
Eq.~\eqref{eqn_proj_cost} for latent-\(v_1\) pairing: it gives the observable OT
problem equivalent to the full-boundary formulation after optimizing out \(v_1\). We now compute the variational costs on mini-batches to compute source--target pairings and trajectory-level supervision. 

Given a source mini-batch $\mathcal B_0=\{(x_0^{(i)},v_0^{(i)})\}_{i=1}^{m}\sim \mu_0$ and a target mini-batch $\mathcal B_1=\{x_1^{(j)}\}_{j=1}^{n}\sim \rho_1$, 
we can construct a pairing cost matrix \(C\in\mathbb R^{m\times n}\) by
\begin{equation}
C_{ij}
=
\begin{cases}
C_T(x_0^{(i)},v_0^{(i)},x_1^{(j)},v_1^{(j)}),
&
\text{observed } v_1,\\[4pt]
C_T^\star(x_0^{(i)},v_0^{(i)},x_1^{(j)}),
&
\text{latent } v_1.
\end{cases}
\label{eq:regime_cost_matrix}
\end{equation}
Here \(v_1^{(j)}\) denotes the terminal velocity estimated from temporal differences, simulator derivatives, or physical derivative information when available. We then solve the entropic OT problem with Sinkhorn iterations and use the resulting minibatch OT plan to select matched pairs \((i,j)\). Details of the OT
solver are provided in Appendix~\ref{app:minibatch_pairing}.

% We then solve the entropic OT problem with Sinkhorn iterations and get matched pairs. All details of solving OT in VTV-FM are in Appendix \ref{app:minibatch_pairing}.

To compute the bridge trajectory and acceleration target for each selected pair \((i,j)\), we first complete its terminal velocity:
\begin{equation}\label{eq:completed_terminal_velocity}
    \widetilde v_1^{(i,j)} =
\begin{cases}
v_1^{(j)},
&
\text{observed } v_1,\\[4pt]
v_1^\star(x_0^{(i)},v_0^{(i)},x_1^{(j)}),
&
\text{latent } v_1.
\end{cases}
\end{equation}
Thus, both regimes produce a complete boundary tuple $\zeta^{(i,j)} = \bigl(x_0^{(i)},v_0^{(i)},x_1^{(j)},\widetilde v_1^{(i,j)}\bigr)$. For \(t\sim\mathrm{Unif}(0,T)\), we instantiate the full-boundary bridge in Proposition~\ref{prop:full_boundary_bridge} as
\[
x_t^\star=\gamma_T^\star(t;\zeta^{(i,j)}),\qquad
v_t^\star=\dot\gamma_T^\star(t;\zeta^{(i,j)}),\qquad
a_t^\star=\ddot\gamma_T^\star(t;\zeta^{(i,j)}).
\]

% and then instantiate the variational bridge in Proposition~\ref{prop:full_boundary_bridge}. 
% For \(t\sim\mathrm{Unif}(0,T)\), this gives
% \[
% x_t^\star
% =
% \gamma_T^\star
% \bigl(t;x_0^{(i)},v_0^{(i)},x_1^{(j)},\widetilde v_1^{(i,j)}\bigr),
% \]
% \[
% v_t^\star
% =
% \dot\gamma_T^\star
% \bigl(t;x_0^{(i)},v_0^{(i)},x_1^{(j)},\widetilde v_1^{(i,j)}\bigr),
% \]
% \[
% a_t^\star
% =
% \ddot\gamma_T^\star
% \bigl(t;x_0^{(i)},v_0^{(i)},x_1^{(j)},\widetilde v_1^{(i,j)}\bigr).
% \]

The tuple \((x_t^\star,v_t^\star,t/T,a_t^\star)\) provides the trajectory supervision to train the acceleration field. This construction unifies pairing and supervision: both the matching cost and the acceleration target are induced by the same minimum-acceleration bridge, with the latent-\(v_1\) closure given by Theorem~\ref{thm:terminal_velocity_projection}.

% The tuple \((x_t^\star,v_t^\star,t/T,a_t^\star)\) provides the trajectory supervision used to train the acceleration field. %in Section~\ref{sec:acceleration_matching}. 
% This construction couples pairing and supervision through the same variational principle. The cost to match samples and the acceleration target used for training are both induced by the full-boundary minimum-acceleration bridge, with the latent-\(v_1\) case obtained by the variational projection in Theorem~\ref{thm:terminal_velocity_projection}.

\subsection{Acceleration Matching and Sampling}
\label{sec:acceleration_matching}
% \vspace{-0.25em}
Given the trajectory supervision \((x_t^\star,v_t^\star,\tau,a_t^\star)\), with \(\tau=t/T\), we train a neural acceleration field \(a_\theta:\mathcal X\times\mathcal V\times[0,1]\to\mathcal V\). 
The primary objective is \emph{acceleration matching}, which regresses the acceleration target induced by the variational bridge:
\begin{equation}\label{eqn_loss_acc}
    \mathcal L_{\mathrm{acc}}(\theta) =
\mathbb E
\left[
\left\|
a_\theta(x_t^\star,v_t^\star,\tau)
-
a_t^\star
\right\|_2^2
\right],
\end{equation}
where the expectation is over source samples, mini-batch OT pairings, and sampled times.

% We optionally add two lightweight regularizers. 
% First, \emph{local phase-space consistency} encourages the learned acceleration to reproduce the short-time evolution of the same variational bridge. 
% The velocity-consistency term enforces \(v_{t+\Delta t}\approx v_t+\Delta t\,a_\theta\):
% \[
% \mathcal L_{\mathrm{va}}
% =
% \mathbb E
% \left[
% \left\|
% v_{t+\Delta t}^\star
% -
% v_t^\star
% -
% \Delta t\,a_\theta(x_t^\star,v_t^\star,\tau)
% \right\|_2^2
% \right],
% \]
% and the position-consistency term enforces the second-order update \(x_{t+\Delta t}\approx x_t+\Delta t\,v_t+\frac12\Delta t^2a_\theta\):
% \[
% \mathcal L_{\mathrm{xv}}
% =
% \mathbb E
% \left[
% \left\|
% x_{t+\Delta t}^\star
% -
% x_t^\star
% -
% \Delta t\,v_t^\star
% -
% \frac{1}{2}\Delta t^2 a_\theta(x_t^\star,v_t^\star,\tau)
% \right\|_2^2
% \right].
% \]
% Second, inspired by flow-divergence matching~\citep{huang2025aligningdivergence}, we add \emph{acceleration-divergence matching} (ADM). 
% For the phase-space vector field \(b_\theta(x,v,t)=(v,a_\theta(x,v,t))\), its divergence is \(\nabla_z\cdot b_\theta=\nabla_v\cdot a_\theta\). 
% Using the divergence target induced by the minimum-acceleration bridge, we regularize
% \[
% \mathcal L_{\mathrm{ADM}}
% =
% \mathbb E
% \left[
% \left(
% \frac{T-t}{d}
% \nabla_v\cdot a_\theta(x_t^\star,v_t^\star,\tau)
% +
% 3
% \right)^2
% \right].
% \]
% This term encourages the learned phase-space dynamics to match the local contraction or expansion structure of the variational bridge.

We optionally add local phase-space consistency to encourage the learned dynamics to reproduce the short-time evolution of the same variational bridge. For a small step \(\Delta t\), we use
\begin{subequations}\label{eqn_loss_local_consistency}
\begin{align}
\mathcal L_{\mathrm{va}}
&=
\mathbb E
\left[
\left\|
v_{t+\Delta t}^\star
-
v_t^\star
-
\Delta t\,a_\theta(x_t^\star,v_t^\star,\tau)
\right\|_2^2
\right],
\label{eqn_loss_va}
\\
\mathcal L_{\mathrm{xv}}
&=
\mathbb E
\left[
\left\|
x_{t+\Delta t}^\star
-
x_t^\star
-
\Delta t\,v_t^\star
-
\frac{1}{2}\Delta t^2 a_\theta(x_t^\star,v_t^\star,\tau)
\right\|_2^2
\right].
\label{eqn_loss_xv}
\end{align}
\end{subequations}
Here \(\mathcal L_{\mathrm{va}}\) enforces local velocity consistency, while \(\mathcal L_{\mathrm{xv}}\) enforces the corresponding second-order position consistency.

Besides, inspired by flow-divergence matching~\citep{huang2025flowdivergence}, we introduce \emph{acceleration-divergence matching} (ADM), a second-order phase-space analogue that regularizes the divergence of the learned acceleration dynamics. We match the bridge-induced divergence target through
\begin{equation}
\label{eqn_loss_adm}
\mathcal L_{\mathrm{ADM}}
=
\mathbb E\!\left[
\left(
\frac{T-t}{d}\nabla_v\!\cdot a_\theta(x_t^\star,v_t^\star,\tau)+3
\right)^2
\right].
\end{equation}
Here, \(+3\) corresponds to the latent-\(v_1\) regime; for fixed observed
\(v_1\), it becomes \(+4\) (Appendix~\ref{app:acceleration_divergence_matching}).
This term encourages the learned phase-space dynamics to match the local contraction or expansion structure of the variational bridge. Detailed derivations and implementation details are provided in Appendices~\ref{app:local_integral_consistency} and~\ref{app:acceleration_divergence_matching}. The total objective is
\[
\mathcal L_{\mathrm{total}}
=
\mathcal L_{\mathrm{acc}}
+
\lambda_{\mathrm{va}}\mathcal L_{\mathrm{va}}
+
\lambda_{\mathrm{xv}}\mathcal L_{\mathrm{xv}}
+
\lambda_{\mathrm{ADM}}\mathcal L_{\mathrm{ADM}}.
\]

\textbf{Sampling.} After training, samples are generated by integrating the learned second-order dynamics
\[
\dot x_t=v_t,
\qquad
\dot v_t=a_\theta(x_t,v_t,t/T)
\]
from an initial phase-space sample \((x_0,v_0)\sim\mu_0\). 
The generated sample is the terminal position \(x_T\). 
In practice, rather than first-order Euler discretization, we use fixed-step solvers better aligned with second-order phase-space dynamics, including Heun, Taylor2, and Velocity Verlet~\citep{hairer1993solving,butcher2016numerical,swope1982computer}. For Velocity Verlet, with $\Delta t=T/N$ and
$a_k=a_\theta(x_k,v_k,t_k/T)$, we first update the position as
$x_{k+1}=x_k+\Delta t\,v_k+\frac{1}{2}\Delta t^2 a_k$.
We then evaluate
$a_{k+1}=a_\theta(x_{k+1},v_k+\Delta t\,a_k,t_{k+1}/T)$
and correct the velocity by
$v_{k+1}=v_k+\frac{\Delta t}{2}(a_k+a_{k+1})$.
Complete update rules for Heun, Taylor2, and Velocity Verlet are
provided in Appendix~\ref{app:numerical_solvers}.
Algorithm~\ref{alg:vtvfm} summarizes the training and sampling procedure:
\begin{algorithm}[h]
\caption{\textsc{VTV-FM}: Variational Terminal-Velocity Flow Matching}
\label{alg:vtvfm}
\begin{algorithmic}[1]
\Require Source batch $B_0=\{(x_0^i,v_0^i)\}_{i=1}^m$, and target batch $B_1=\{x_1^j\}_{j=1}^n$, horizon $T$, acceleration model $a_\theta$

\For{each training iteration}

    \State For the latent-\(v_1\) regime, compute \(v_1^\star\) by Eq.~(\ref{eqn_lat_terminal_v}) and obtain \(C_T^\star\) by Eq.~(\ref{eqn_proj_cost}).

    \State Construct the mini-batch OT cost matrix \(C_{ij}\) by Eq.~(\ref{eq:regime_cost_matrix}).

    \State Solve mini-batch OT by Sinkhorn and obtain matched pairs \((i,j)\).

    \State Complete the terminal velocity \(\widetilde v_1^{(i,j)}\) by Eq.~(\ref{eq:completed_terminal_velocity}).

    \State Sample \(t\sim \mathrm{Unif}(0,T)\) and set \(\tau=t/T\).

    \State Compute trajectory supervision \((x_t^\star,v_t^\star,a_t^\star)\) using the full-boundary bridge in Eqs.~(\ref{eqn_hermite_bridge_x}),~(\ref{eqn_hermite_bridge_v}).
\label{}
    \State Update \(a_\theta\) by minimizing Eq.~(\ref{eqn_loss_acc}), optionally with the regularizers in Eqs.~(\ref{eqn_loss_va}),~(\ref{eqn_loss_xv}), ~(\ref{eqn_loss_adm}).

\EndFor

\State \textbf{Sampling:} draw \((x_0,v_0)\sim\mu_0\), integrate the learned second-order dynamics, and return \(x_T\).

\end{algorithmic}
\end{algorithm}
\vspace{-0.25em}

\vspace{-0.50em}
\section{Experiments}
\label{sec:experiments}
\vspace{-0.25em}

We quantitatively evaluate VTV-FM on low-dimensional datasets,
PDE-governed physical fields ~\citep{lippe2023pderefiner}, and
CIFAR-10~\citep{krizhevsky2009learning}, and additionally provide
qualitative proof-of-concept temporal examples on BAIR robot
pushing~\citep{ebert2017selfsupervised} and
KTH~\citep{schuldt2004recognizing}.
The low-dimensional datasets and CIFAR-10 evaluate the missing-\(v_1\)
regime, where terminal velocity is unobserved and must be inferred by
the proposed closure.
The low-dimensional setting further enables direct visualization of
sample quality and transport trajectories.
In contrast, the PDE and temporal examples evaluate the
observed/derivable-\(v_1\) regime, where terminal derivative information
is available from governing dynamics, simulators, or adjacent frames.
Additional qualitative temporal visualizations are provided in
Appendix~\ref{app:video_vis}.

%\vspace{-0.25em}
\subsection{Low-Dimensional Case Studies}
\label{sec:toy}
%\vspace{-0.25em}

We evaluate VTV-FM on low-dimensional datasets, where generated samples and transport trajectories can be directly visualized.
This setting compares sample quality and transport geometry on complex two-dimensional structures.
Loss and solver analyses are provided in Appendix~\ref{app:toy_ablation_solver}.

\textbf{Datasets.}
We use seven two-dimensional datasets: Dinosaur, Swiss Roll, Checkerboard, Two Moons, Parallel Lines, Tree, and Star.
These datasets cover curved manifolds, multimodal structures, branching patterns, and sharp local direction changes, making them suitable for testing how different flow constructions model low-dimensional geometry.

\textbf{Baselines.}
We compare VTV-FM with CFM~\citep{lipman2023flowmatching}, OT-CFM~\citep{tong2024minibatchot}, OAT-FM~\citep{yue2025oatfm}, HOFM~\citep{su2025highorderflowmatching}, and CurveFlow~\citep{luo2025curveflow}.
CFM and OT-CFM are first-order baselines, where OT-CFM improves source--target pairing through minibatch OT while retaining a first-order linear conditional bridge.
OAT-FM, HOFM, and CurveFlow are stronger acceleration-based or curve-based baselines, allowing us to compare whether high-order path modeling improves low-dimensional generation quality.

\textbf{Metric.}
We evaluate generation quality using sample-based \(W_2\).
For each dataset, we denote the generated samples by \(\{\hat{x}_i\}_{i=1}^n\) and the target samples by \(\{y_i\}_{i=1}^n\).
The corresponding empirical distributions are \(\widehat{\rho}_{\mathrm{gen}}=\sum_i\delta_{\hat{x}_i}/n\) and \(\widehat{\rho}_{\mathrm{data}}=\sum_i\delta_{y_i}/n\).
We report \(W_2(\widehat{\rho}_{\mathrm{gen}},\widehat{\rho}_{\mathrm{data}})=\left(\min_{\pi\in\Pi}\sum_{i,j}\pi_{ij}\|\hat{x}_i-y_j\|_2^2\right)^{1/2}\), where \(\Pi\) denotes the set of couplings between the two uniform empirical distributions.
Lower values indicate better agreement between generated and target samples.
For the radar plot, we convert the raw \(W_2\) values into a per-dataset relative score.
Given dataset \(d\) and method \(m\), we define \(s_{d,m}=\min_{m'}W_2(d,m')/W_2(d,m)\).
Thus, the best method on each dataset has score \(1\), and higher radar scores indicate better relative performance.

\textbf{Results and findings.}
Table~\ref{tab:toy_main} reports the main results on seven low-dimensional datasets.
VTV-FM achieves the best average sample-based \(W_2\), reaching \(0.1004\) and outperforming all baselines.
Compared with CFM, OT-CFM, OAT-FM, HOFM, and CurveFlow, VTV-FM reduces the average \(W_2\) by approximately \(9.1\%\), \(6.8\%\), \(4.1\%\), \(3.4\%\), and \(2.4\%\), respectively.
The radar relative score shows the same trend: VTV-FM obtains the highest average score of \(0.987\), compared with \(0.963\) for CurveFlow and \(0.955\) for HOFM, indicating more stable relative performance across datasets.

At the dataset level, VTV-FM achieves the best result on \(4/7\) datasets: Checkerboard, Two Moons, Parallel Lines, and Star.
Compared with the strongest competing baseline on these datasets, VTV-FM reduces \(W_2\) by approximately \(3.0\%\), \(6.2\%\), \(0.6\%\), and \(1.7\%\), respectively.
Checkerboard, Two Moons, and Star involve multimodality, separated modes, or sharp local direction changes, suggesting that second-order phase-space dynamics are particularly beneficial for complex low-dimensional structures.
On another three datasets, VTV-FM is not the best method but remains competitive.
On Tree, it is only about \(0.8\%\) worse than the best HOFM result.
This suggests that high-order or OT-based path modeling can still be effective for certain geometries.
However, in terms of average performance, VTV-FM shows the most robust behavior across low-dimensional structures.
Additional controlled analyses of the source phase-space prior,
terminal-velocity completion, pairing--bridge consistency, and
computational efficiency on the low-dimensional benchmarks are provided
in Appendices~\ref{app:source_prior_ablation}--\ref{app:efficiency_scaling}.

\begin{figure*}[t]
\centering

\begin{minipage}[t]{0.64\textwidth}
\vspace{0pt}
\captionof{table}{
Low-dimensional results measured by sample-based \(W_2\).
Lower is better.
The right radar plot visualizes the same results using the relative score
\(s_{d,m}\), where higher is better.
}
\label{tab:toy_main}
%\vspace{-0.6em}

\centering
\small
\setlength{\tabcolsep}{2.6pt}
\renewcommand{\arraystretch}{1.05}
\resizebox{\linewidth}{!}{
\begin{tabular}{lcccccc}
\toprule
Dataset & CFM & OT-CFM & OAT-FM & HOFM & CurveFlow & VTV-FM \\
\midrule
Dinosaur        & 0.1336 & 0.1302 & 0.1237 & \textbf{0.1064} & 0.1088 & 0.1109 \\
Swiss Roll      & 0.1083 & \textbf{0.1037} & 0.1064 & 0.1209 & 0.1176 & 0.1081 \\
Checkerboard    & 0.1229 & 0.1207 & 0.1148 & 0.1137 & 0.1126 & \textbf{0.1092} \\
Two Moons       & 0.0978 & 0.0956 & 0.0884 & 0.0857 & 0.0842 & \textbf{0.0790} \\
Parallel Lines  & 0.1441 & 0.1381 & 0.1374 & 0.1391 & 0.1370 & \textbf{0.1362} \\
Tree            & 0.0760 & 0.0744 & 0.0737 & \textbf{0.0724} & 0.0727 & 0.0730 \\
Star            & 0.0902 & 0.0915 & 0.0885 & 0.0893 & 0.0877 & \textbf{0.0862} \\
\midrule
Avg.            & 0.1104 & 0.1077 & 0.1047 & 0.1039 & 0.1029 & \textbf{0.1004} \\
\bottomrule
\end{tabular}
}
\end{minipage}
\hfill
\begin{minipage}[t]{0.35\textwidth}
%\vspace{0pt}
\vspace*{0.8em}
\centering
\includegraphics[width=\linewidth]{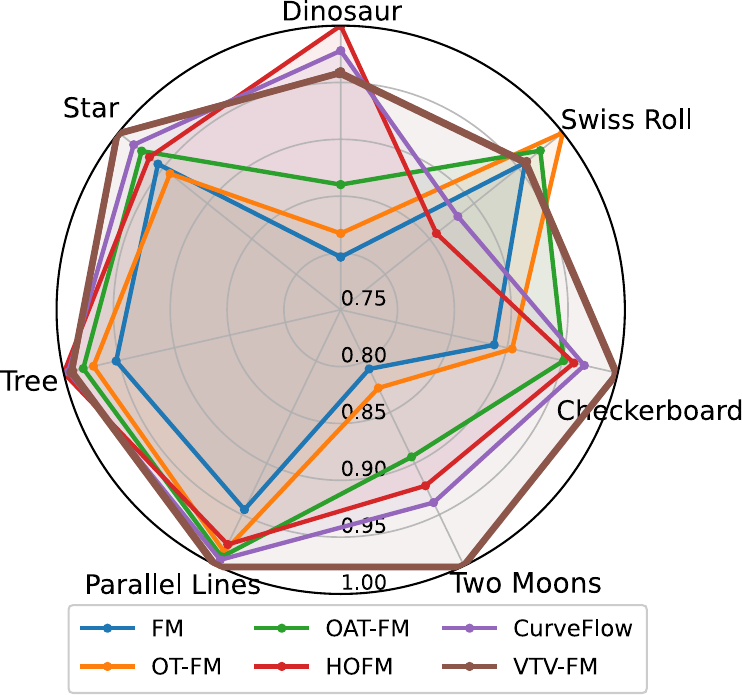}
%\vspace{-0.5em}

\end{minipage}

\vspace{-0.8em}
\end{figure*}

% \vspace{-0.25em}
\subsection{PDE Case Studies in the Observed-\texorpdfstring{$v_1$}{v1} Regime}
\label{sec:pde}

We evaluate VTV-FM on PDE-governed generation tasks.
Unlike two-dimensional datasets and CIFAR-10, PDE data provide explicit temporal evolution, so the terminal derivative can be obtained from the governing equation or numerical simulator.
This corresponds to the observed-\(v_1\) regime, where VTV-FM can use the full-boundary second-order bridge without terminal-velocity closure.

\textbf{Datasets.}
We consider two two-dimensional PDE datasets: Navier--Stokes vorticity generation (NS2D) and Kuramoto--Sivashinsky generation (KS2D).
For NS2D, the state variable is the vorticity field \(\omega\), governed by \(\partial_t\omega+u\cdot\nabla\omega=\nu\Delta\omega+f\).
For KS2D, the state variable is a scalar field \(u\), governed by \(\partial_tu+\frac{1}{2}|\nabla u|^2+\Delta u+\Delta^2u=0\).
In both datasets, the terminal derivative can be obtained from the PDE right-hand side or simulator outputs.

% \textbf{Baselines.}
% We compare VTV-FM with OT-CFM and HOFM.
% OT-CFM is the stronger first-order OT-based baseline in Section~\ref{sec:toy}, while HOFM is a competitive high-order baseline.
% This comparison tests whether VTV-FM can further benefit from the observed terminal derivative and the full-boundary second-order bridge.

\textbf{Baselines.}
We compare VTV-FM with OT-CFM and HOFM.
OT-CFM is the stronger first-order OT-based baseline, while HOFM is a competitive high-order baseline.
This comparison evaluates whether terminal derivative information and the full-boundary bridge improve second-order generation.

\textbf{Metrics.}
We evaluate PDE generation quality using MMD, PSD log-\(L_2\), and a derivative-sensitive gap.
MMD~\citep{gretton2012kernel} measures global distributional discrepancy, PSD log-\(L_2\)
measures multiscale spectral fidelity, and the derivative-sensitive gap
compares the distributions of per-sample \(L_2\) norms of PDE-related quantities.
For NS2D, we use \(\nu\Delta\omega\); for KS2D, we use the PDE right-hand side.

\begin{wraptable}{r}{0.5\linewidth}
\vspace{-1.0em}
\centering
\caption{
PDE generation results.
}
\label{tab:pde_results}
%\scriptsize
\footnotesize
\setlength{\tabcolsep}{3.0pt}
\renewcommand{\arraystretch}{0.88}
\begin{tabular}{llccc}
\toprule
Dataset & Method & MMD & PSD log-\(L_2\) & Deriv. gap \\
\midrule
\multirow{3}{*}{NS2D}
& OT-CFM  & 0.017 & 5.08 & \(6.93{\times}10^{-3}\) \\
& HOFM    & 0.015 & 4.22 & \(1.06{\times}10^{-3}\) \\
& VTV-FM  & \textbf{0.014} & \textbf{3.48} & \(\mathbf{2.80{\times}10^{-4}}\) \\
\midrule
\multirow{3}{*}{KS2D}
& OT-CFM  & 0.035 & 9.00 & \(5.50{\times}10^{-1}\) \\
& HOFM    & 0.015 & 8.13 & \(9.17{\times}10^{-2}\) \\
& VTV-FM  & \textbf{0.010} & \textbf{7.52} & \(\mathbf{1.30{\times}10^{-2}}\) \\
\bottomrule
\end{tabular}
\vspace{-1.0em}
\end{wraptable}

\textbf{Results and findings.}
Table~\ref{tab:pde_results} reports the PDE generation results on NS2D and KS2D.
VTV-FM achieves the best result on both PDE datasets across all metrics, improving global distributional matching, spectral fidelity, and PDE-related derivative structure.

On NS2D, VTV-FM reduces MMD from \(0.017\) (OT-CFM) and \(0.015\) (HOFM) to \(0.014\), corresponding to relative reductions of approximately \(17.6\%\) and \(6.7\%\), respectively.
For spectral structure, it reduces PSD log-\(L_2\) from \(5.08\) and \(4.22\) to \(3.48\), giving reductions of approximately \(31.5\%\) over OT-CFM and \(17.5\%\) over HOFM.
The largest gain appears in the derivative-sensitive gap, VTV-FM reduces it from \(6.93{\times}10^{-3}\) and \(1.06{\times}10^{-3}\) to \(2.80{\times}10^{-4}\), corresponding to reductions of approximately \(96.0\%\) and \(73.6\%\).
This indicates that VTV-FM better preserves diffusion-related derivative structure.

On KS2D, VTV-FM also consistently improves over both baselines.
It reduces MMD from \(0.035\) (OT-CFM) and \(0.015\) (HOFM) to \(0.010\), yielding relative reductions of approximately \(71.4\%\) and \(33.3\%\), respectively.
For PSD log-\(L_2\), VTV-FM reduces the value from \(9.00\) and \(8.13\) to \(7.52\), corresponding to reductions of approximately \(16.4\%\) and \(7.5\%\).
For the derivative-sensitive gap, it reduces the value from \(5.50{\times}10^{-1}\) and \(9.17{\times}10^{-2}\) to \(1.30{\times}10^{-2}\), giving reductions of approximately \(97.6\%\) and \(85.8\%\).
This suggests that VTV-FM improves not only overall distribution matching, but also the PDE-related dynamical structure of the KS system.

Overall, the most pronounced improvements appear in the derivative-sensitive gap on both PDE systems.
These results show that VTV-FM can directly exploit terminal derivative information and unify it with the full-boundary bridge geometry and trajectory supervision.

% \begin{table}[t]
% \centering
% \caption{
% PDE generation results in the observed-\(v_1\) regime.
% Lower is better for all metrics.
% }
% \label{tab:pde_results}
% \small
% \setlength{\tabcolsep}{5.5pt}
% \begin{tabular}{llccc}
% \toprule
% Dataset & Method & MMD & PSD log-\(L_2\) & Derivative gap \\
% \midrule
% \multirow{3}{*}{NS2D}
% & OT-CFM  & 0.017 & 5.08 & \(6.93{\times}10^{-3}\) \\
% & HOFM    & 0.015 & 4.22 & \(1.06{\times}10^{-3}\) \\
% & VTV-FM  & \textbf{0.014} & \textbf{3.48} & \(\mathbf{2.80{\times}10^{-4}}\) \\
% \midrule
% \multirow{3}{*}{KS2D}
% & OT-CFM  & 0.035 & 9.00 & \(5.50{\times}10^{-1}\) \\
% & HOFM    & 0.015 & 8.13 & \(9.17{\times}10^{-2}\) \\
% & VTV-FM  & \textbf{0.010} & \textbf{7.52} & \(\mathbf{1.30{\times}10^{-2}}\) \\
% \bottomrule
% \end{tabular}
% \vspace{-0.8em}
% \end{table} 

\vspace{-0.25em}

\subsection{CIFAR-10 Case Study in the Missing-\texorpdfstring{$v_1$}{v1} Regime}
\label{sec:cifar10}
\vspace{-0.25em}

We next evaluate VTV-FM on CIFAR-10, a standard image-generation benchmark where target-side terminal derivatives are unavailable.
This corresponds to the missing-\(v_1\) regime: only target samples are observed, and the second-order bridge must be instantiated using the terminal-velocity closure derived in Section~\ref{sec:method}.
This experiment tests whether the proposed construction remains effective in high-dimensional image generation.

\textbf{Baselines and evaluation.}
We compare VTV-FM with representative Flow Matching baselines, including FM, I-CFM, OT-CFM, OAT-FM, HOFM, and CurveFlow.
FM and I-CFM use independent source--target coupling, while OT-CFM uses minibatch OT pairing.
OAT-FM further refines first-order couplings through acceleration transport, whereas HOFM and CurveFlow provide high-order or curve-based path-modeling baselines.
Because these methods use different dynamical orders and native sampling procedures, we report both solver-controlled comparisons and method-specific inference protocols.
For the controlled cross-method comparison, VTV-FM uses acceleration matching only, without the local consistency or ADM regularizers.

\begin{table}[!htbp]
\centering
\caption{
CIFAR-10 FID under controlled inference protocols.
For adaptive Dopri5, the measured number of function evaluations (NFE) is shown in parentheses.
}
\label{tab:cifar10_solver_controlled}
\small
\setlength{\tabcolsep}{4.5pt}
\renewcommand{\arraystretch}{1.0}
\begin{tabular}{lccccc}
\toprule
Method
& Euler-100
& Dopri5
& Heun-100
& Taylor2-100
& Verlet-100 \\
\midrule
FM
& 4.58 & 3.73 (147) & 3.62 & -- & -- \\
I-CFM
& 4.48 & 3.68 (146) & 3.59 & -- & -- \\
OT-CFM
& 4.46 & 3.60 (134) & 3.48 & -- & -- \\
FM + OAT-FM
& 3.91 & 3.56 (135) & 3.51 & -- & -- \\
I-CFM + OAT-FM
& 3.77 & 3.50 (138) & 3.45 & -- & -- \\
OT-CFM + OAT-FM
& 3.72 & 3.47 (126) & 3.42 & -- & -- \\
CurveFlow
& 3.47 & 3.41 (130) & 3.40 & -- & -- \\
HOFM
& -- & -- & -- & 3.45 & -- \\
\midrule
VTV-FM (Acc. only)
& \textbf{3.28}
& \textbf{3.23 (81)}
& \textbf{3.17}
& \textbf{3.25}
& \textbf{3.10} \\
\bottomrule
\end{tabular}
\vspace{-0.5em}
\end{table}

\textbf{Solver-controlled comparison.}
Table~\ref{tab:cifar10_solver_controlled} separates method design from solver choice.
Under the shared Heun-100 protocol with 200 NFE, FM, OT-CFM, and VTV-FM obtain FIDs of \(3.62\), \(3.48\), and \(3.17\), respectively.
Under the same adaptive Dopri5 tolerance, VTV-FM reaches an FID of \(3.23\) with \(81\) NFE, compared with \(3.73\) with \(147\) NFE for FM and \(3.60\) with \(134\) NFE for OT-CFM.
These results show that the improvement persists under shared general-purpose solvers and therefore cannot be attributed solely to Velocity Verlet, auxiliary regularization, or a larger sampling budget.

\textbf{Role of the auxiliary objectives.}
We next evaluate the effect of local consistency and acceleration-divergence matching (ADM) within VTV-FM.
Table~\ref{tab:cifar10_ablation} reports the results under three fixed-step phase-space solvers.
Acceleration matching alone already achieves FIDs of \(3.10\), \(3.17\), and \(3.25\) with Velocity Verlet, Heun, and Taylor2, respectively.
Adding local consistency improves these values to \(3.06\), \(3.16\), and \(3.23\).
Adding ADM alone gives \(3.05\), \(3.14\), and \(3.19\), showing a larger individual improvement than local consistency under all three solvers.
Combining both auxiliary terms gives the best overall results, including an FID of \(2.99\) with 100-step Velocity Verlet sampling.

\begin{wraptable}[9]{r}{0.46\linewidth}
\vspace{-0.7em}
\centering
\caption{
VTV-FM objective ablation.
}
\label{tab:cifar10_ablation}
\footnotesize
\setlength{\tabcolsep}{2.8pt}
\renewcommand{\arraystretch}{0.92}
\resizebox{\linewidth}{!}{
\begin{tabular}{@{}lccc@{}}
\toprule
Objective & Verlet & Heun & Taylor2 \\
\midrule
Acc.
& 3.10 & 3.17 & 3.25 \\
Acc. + Cons.
& 3.06 & 3.16 & 3.23 \\
Acc. + ADM
& 3.05 & 3.14 & 3.19 \\
Acc. + Cons. + ADM
& \textbf{2.99} & \textbf{3.13} & \textbf{3.18} \\
\bottomrule
\end{tabular}
}
\vspace{-0.7em}
\end{wraptable}

Here, \(\mathrm{Cons.}=\mathcal L_{\mathrm{va}}+\mathcal L_{\mathrm{xv}}\).
The ablation indicates that the main gain is already present with bridge-induced acceleration supervision, while local consistency and ADM provide complementary refinements.
ADM yields the larger individual gain, and combining both auxiliary terms gives the strongest overall performance.
Accordingly, we treat acceleration matching as the core learning objective and the consistency and ADM terms as optional refinements.

\textbf{Results and findings.}
The controlled comparisons show that VTV-FM improves over first-order FM and OT-based FM even when solver choice is held fixed.
In particular, under Heun-100, VTV-FM reduces FID from \(3.62\) for FM and \(3.48\) for OT-CFM to \(3.17\).
Under adaptive Dopri5, VTV-FM also achieves a lower FID with fewer function evaluations.
Within VTV-FM, the full objective further improves the best fixed-step result from \(3.10\) with acceleration matching alone to \(2.99\) when both consistency and ADM are included.

These results support two complementary conclusions.
First, the improvement of VTV-FM is not a solver-specific effect: it persists under shared Heun and Dopri5 inference protocols.
Second, the auxiliary regularizers refine an already strong acceleration-matching model rather than accounting for the primary gain.
Together, the results show that the variationally induced phase-space construction remains effective in the high-dimensional missing-\(v_1\) regime.
Qualitative CIFAR-10 samples are provided in
Appendix~\ref{app:cifar10_samples}.
Additional ImageNet-256 scaling results are reported in
Appendix~\ref{app:efficiency_scaling}.

\section{Related Work}
\vspace{-0.25em}
\textbf{Flow Matching and OT-based coupling.}
Flow Matching learns generative transport by regressing time-dependent
vector fields along prescribed probability paths, with stochastic
interpolants and rectified flows providing related formulations~
\citep{lipman2023flowmatching,albergo2023stochastic,liu2023flowstraightfast}.
Minibatch optimal transport improves source--target coupling and
simplifies transport paths and regression targets~
\citep{tong2024minibatchot,pooladian2023multisample}.
In OT-CFM with linear conditional bridges, optimal transport improves
sample pairing, but each conditional trajectory remains a straight
line between the paired endpoints.

\textbf{Higher-order dynamics and path design.}
Beyond improving sample coupling, existing methods enrich generative
dynamics through higher-order supervision and alternative dynamical
constructions.
HOMO introduces acceleration and higher-order derivative supervision
into one-step shortcut models~\citep{chen2025highordermatching},
and NRFlow jointly learns velocity and acceleration fields to improve
noise robustness and trajectory stability~\citep{chen2025nrflow}.
HOFM develops a unified framework incorporating arbitrary-order
trajectory derivatives, with conditional regression formulations,
derivative consistency constraints, and statistical convergence
rates~\citep{su2025highorderflowmatching}.
Related theory also studies estimation errors in higher-order
trajectory refinement~\citep{gong2025theoreticalhighorder}.
In terms of dynamical structure and path geometry, HRF models
velocity, acceleration, and higher-order variables through
hierarchically coupled rectified flows~\citep{zhang2025hrf},
and CurveFlow explores curvature-guided path objectives~
\citep{luo2025curveflow}.
For few-step generation, Terminal Velocity Matching matches the
derivative of a two-time displacement model with respect to its
terminal time~\citep{zhou2026terminal};
this derivative target is distinct from the data-side endpoint
velocity considered here.

\textbf{Minimum-acceleration transport and boundary conditions.}
A closely related variational approach uses squared-acceleration
action to construct trajectories and transport costs.
Classical spline theory shows that minimizing this action with
fixed endpoint positions and velocities yields a cubic Hermite
curve~\citep{deboor1978splines}.
At the distribution level, second-order optimal transport studies
acceleration-minimizing paths and cubic Wasserstein splines~
\citep{benamou2019secondorder}.
OAT-FM applies optimal acceleration transport to flow matching~
\citep{yue2025oatfm}, and OTIKIN studies second-order discrepancies
between distributions induced by acceleration costs~
\citep{brigati2025otikin}.
Endpoint information also plays an important role:
boundary-constrained flow matching imposes prescribed endpoint
behavior on learned first-order velocity fields~
\citep{hu2025boundaryconditions}.
Our setting concerns second-order phase-space bridges whose terminal
velocity is not provided by static data.
VTV-FM treats this unspecified velocity as an optimization variable
in the minimum-acceleration problem and eliminates it analytically,
jointly determining the terminal-velocity closure, bridge,
corresponding OT pairing cost, and acceleration supervision.

\vspace{-0.25em}
\section{Conclusion}
\vspace{-0.25em}
We introduced \textbf{VTV-FM}, a second-order FM framework for generative transport in phase space. 
The central challenge is that second-order bridges require a target-side terminal velocity, which is often unavailable for static data. 
VTV-FM addresses this issue with a minimum-acceleration variational principle: when the terminal velocity is missing, the principle yields an analytic terminal-velocity closure and a projected OT cost; when the terminal velocity is observed or derivable, the same full-boundary bridge directly incorporates it. This construction unifies terminal-velocity closure, source--target pairing, and acceleration supervision under a single variational framework. 
Experiments on low-dimensional datasets, PDE-governed fields, and CIFAR-10 show that VTV-FM improves generation quality over first-order, OT-based, and higher-order FM baselines, while producing more reliable second-order transport trajectories. 
Future work includes developing richer kinetic priors, extending the framework to conditional and controlled generation, and scaling second-order transport constructions to larger generative modeling tasks.

\vspace{-0.25em}
\section*{Acknowledgements and Disclosures}
\vspace{-0.25em}

\textbf{Funding.}
% The work done by H.\,Jiang and Y.\,He was supported in part by the National Science Foundation (NSF) under Grant Nos.\,IIS-2544353 (CAREER), IIS-2505719, IIS-2441449, and IOS-2446522, and in part by the Commonwealth Cyber Initiative (CCI). The work done by Y.\,Li and H. \,Chen was supported by the National Science Foundation under IIS-2348405, the Coastal Virginia Center for Cyber Innovation (COVA CCI), Commonwealth Cyber Initiative (CCI), the William \& Mary Faculty Research Grant, and generous gifts from NVIDIA and Amazon. And the work done by D.\,Yang and Y.\,Xiong was supported by the Coastal Virginia Center for Cyber Innovation (COVA CCI) and the Commonwealth Cyber Initiative (CCI).
The work done by H.\,Jiang and Y.\,He was supported in part by the National Science Foundation (NSF) under Grant Nos.\,IIS-2544353 (CAREER), IIS-2505719, IIS-2441449, and IOS-2446522, and in part by the Commonwealth Cyber Initiative (CCI). The work done by Y.\,Li and H. \,Chen was supported by the NSF under IIS-2348405, the Coastal Virginia Center for Cyber Innovation (COVA CCI), CCI, the William \& Mary Faculty Research Grant, and generous gifts from NVIDIA and Amazon. And the work done by D.\,Yang and Y.\,Xiong was supported by the COVA CCI and the CCI.

\textbf{Competing interests.}
The authors declare no financial relationships or competing interests outside the submitted work that could reasonably be perceived as influencing this work.

\bibliographystyle{unsrtnat}
% \bibliography{references}
\bibliography{references_camera_ready}

\clearpage
\section*{Technical Appendices and Supplementary Material}
\label{app:supplementary}

\startcontents[appendices]
\printcontents[appendices]{}{1}{
    \setcounter{tocdepth}{2}
    \vspace{-0.5em}
}

\appendix

% \section*{Technical appendices and supplementary material}
% \appendix

\newpage

\section{Limitations.}
VTV-FM relies on a minimum-acceleration principle to close the missing terminal velocity in static-data regimes. 
Although this gives an analytic and internally consistent bridge, richer kinetic structures or multimodal terminal velocities may require more flexible closures. 
Our experiments include low-dimensional datasets, PDE-governed fields,
CIFAR-10, an ImageNet-256 scaling study, and qualitative video examples.
Broader evaluation on higher-resolution image generation, quantitative
video generation, and conditional generation remains future work.

\vspace{-7pt}
\section{Broader impacts.}
\vspace{-7pt}
This work is primarily methodological and may benefit scientific generation, physical simulation, and data-driven modeling by improving transport geometry and derivative-aware generation. 
As with other generative modeling methods, improved sample quality may also be misused for synthetic image or video generation. 
The paper does not propose a deployed system or use private personal data, and our experiments are limited to standard benchmarks and simulated physical fields.
\vspace{-7pt}

\section{Core Derivations and Algorithm Details}
\label{app:core_derivations}

We prove the full-boundary bridge and terminal-velocity projection (subsections \ref{app:full_boundary_bridge} and \ref{app:terminal_velocity_projection}),
then provide minibatch pairing, auxiliary-loss, and sampling details (subsections \ref{app:minibatch_pairing}, \ref{app:local_integral_consistency}, \ref{app:acceleration_divergence_matching}, and \ref{app:numerical_solvers})
for Section~\ref{sec:method}.

\subsection{Proof of Proposition~\ref{prop:full_boundary_bridge}}
\label{app:full_boundary_bridge}

\begin{proof}
Let \(\Delta x=x_1-x_0\) and define
\[
\gamma^\star(t)=x_0+v_0t+\beta t^2+\alpha t^3,
\qquad
\alpha=\frac{-2\Delta x+T(v_0+v_1)}{T^3},
\qquad
\beta=\frac{3\Delta x-T(2v_0+v_1)}{T^2}.
\]
This polynomial satisfies all four boundary conditions and is
equivalent to the Hermite representation in
Eq.~\eqref{eqn_hermite_bridge_x}. Its derivatives give
Eq.~\eqref{eqn_hermite_bridge_v}.

For any other admissible \(\gamma\in\Gamma_T(x_0,v_0,x_1,v_1)\),
write \(r=\gamma-\gamma^\star\). Then
\(r(0)=r(T)=\dot r(0)=\dot r(T)=0\). Since
\((\gamma^\star)^{(4)}=0\), integration by parts gives
\[
\int_0^T\langle\ddot\gamma^\star,\ddot r\rangle\,dt=0.
\]
Consequently,
\[
\mathcal A_T[\gamma]
=\mathcal A_T[\gamma^\star]
+\int_0^T\|\ddot r(t)\|_2^2\,dt
\geq\mathcal A_T[\gamma^\star].
\]
Equality requires \(\ddot r=0\) almost everywhere; the homogeneous
boundary conditions then imply \(r=0\). Thus \(\gamma^\star\) is the
unique global minimizer in the admissible Sobolev class.

Finally, \(\ddot\gamma^\star(t)=6\alpha t+2\beta\) yields
\[
\mathcal A_T[\gamma^\star]
=12T^3\|\alpha\|_2^2
+12T^2\langle\alpha,\beta\rangle
+4T\|\beta\|_2^2.
\]
Substituting \(\alpha,\beta\) and completing the square gives
Eq.~\eqref{eqn_close_cost}.
\end{proof}

\subsection{Proof of Theorem~\ref{thm:terminal_velocity_projection}:
Variational Terminal-Velocity Projection}
\label{app:terminal_velocity_projection}

\begin{proof}
Completing the square in Eq.~\eqref{eqn_close_cost} gives
\[
\begin{aligned}
C_T(x_0,v_0,x_1,v_1)
&=\frac{3}{T^3}\|x_1-x_0-Tv_0\|_2^2\\
&\quad+\frac{4}{T}
\left\|v_1-\left(\frac{3}{2T}(x_1-x_0)-\frac12v_0\right)\right\|_2^2.
\end{aligned}
\]
Since \(T>0\), the second term has a unique minimizer in \(v_1\).
This proves the closure and projected cost in
Eqs.~\eqref{eqn_lat_terminal_v}--\eqref{eqn_proj_cost}.

To establish the coupling-level statement, write \(s=(x_0,v_0,x_1)\)
and let \(\eta\in\Pi(\mu_0,\rho_1)\) be an observable coupling.
Define its full-boundary lifts by
\[
\mathcal L(\eta)
=\{\pi\in\mathcal P(\mathcal X\times\mathcal V\times
\mathcal X\times\mathcal V):p_\#\pi=\eta\},
\qquad p(s,v_1)=s.
\]
For any \(\pi\in\mathcal L(\eta)\), the completed-square identity yields
\[
\int C_T(s,v_1)\,d\pi
=\int C_T^\star(s)\,d\eta
+\frac{4}{T}\int\|v_1-v_1^\star(s)\|_2^2\,d\pi
\geq\int C_T^\star(s)\,d\eta.
\]
Equality is attained by the deterministic lift
\(\pi^\star=(\mathrm{id},v_1^\star)_\#\eta\), which assigns
\(v_1=v_1^\star(s)\) to each observable tuple. Hence
\[
\inf_{\pi\in\mathcal L(\eta)}\int C_T\,d\pi
=\int C_T^\star\,d\eta.
\]
Taking the infimum over observable couplings gives
\[
\inf_{\eta\in\Pi(\mu_0,\rho_1)}\int C_T^\star\,d\eta
=\inf_{\eta\in\Pi(\mu_0,\rho_1)}
\inf_{\pi\in\mathcal L(\eta)}\int C_T\,d\pi,
\]
which proves the exact variational projection.
\end{proof}

\subsection{Mini-batch Pairing and Supervision Details}
\label{app:minibatch_pairing}

For the cost matrix \(C\) in Eq.~\eqref{eq:regime_cost_matrix}, we use
uniform minibatch weights \(a_i=1/m\) and \(b_j=1/n\). The entropic
coupling solves
\[
\min_{\substack{P\geq0\\P\mathbf1=a,\;P^\top\mathbf1=b}}
\sum_{i,j}P_{ij}C_{ij}
+\varepsilon\sum_{i,j}P_{ij}(\log P_{ij}-1).
\]
With \(K_{ij}=\exp(-C_{ij}/\varepsilon)\), Sinkhorn iterations give
\[
u\leftarrow\frac{a}{Kw},
\qquad
w\leftarrow\frac{b}{K^\top u},
\qquad
P^\star=\operatorname{diag}(u)K\operatorname{diag}(w),
\]
where divisions are elementwise. These updates may equivalently be
performed in the log domain for numerical stability.

Matched pairs are drawn jointly from \(P^\star\), or by sampling
\(i\sim a\) followed by
\[
j\sim q(\cdot\mid i),
\qquad
q(j\mid i)=\frac{P_{ij}^\star}{a_i}=mP_{ij}^\star.
\]
For each pair, Eq.~\eqref{eq:completed_terminal_velocity} supplies the
terminal velocity. Sampling \(t\sim\operatorname{Unif}(0,T)\) and
evaluating Eqs.~\eqref{eqn_hermite_bridge_x} and
\eqref{eqn_hermite_bridge_v} then gives
\((x_t^\star,v_t^\star,t/T,a_t^\star)\) for acceleration matching.
The complete training procedure is given in Algorithm~\ref{alg:vtvfm}.

\subsection{Local Integral Consistency}
\label{app:local_integral_consistency}

For a variational bridge and \(0\leq t\leq T-\Delta t\), its exact
increments satisfy
\[
\begin{aligned}
v_{t+\Delta t}^\star-v_t^\star
&=\int_t^{t+\Delta t}a_s^\star\,ds,\\
x_{t+\Delta t}^\star-x_t^\star-\Delta t\,v_t^\star
&=\int_t^{t+\Delta t}(t+\Delta t-s)a_s^\star\,ds.
\end{aligned}
\]
Approximating each integral using the learned acceleration at its
left endpoint gives
\[
\begin{aligned}
v_{t+\Delta t}^\star-v_t^\star
&\approx\Delta t\,a_\theta(x_t^\star,v_t^\star,t/T),\\
x_{t+\Delta t}^\star-x_t^\star-\Delta t\,v_t^\star
&\approx\frac12\Delta t^2a_\theta(x_t^\star,v_t^\star,t/T).
\end{aligned}
\]
The expected squared residuals are precisely
\(\mathcal L_{\mathrm{va}}\) and \(\mathcal L_{\mathrm{xv}}\) in
Eqs.~\eqref{eqn_loss_va}--\eqref{eqn_loss_xv}.
They supplement instantaneous acceleration matching with finite-step
increment supervision. Because the acceleration varies along a cubic
bridge, these are local approximations rather than exact integration
constraints. We use \(\Delta t=T/N\), as specified in
Appendix~\ref{app:implementation_details}.

\subsection{Acceleration Divergence Matching}
\label{app:acceleration_divergence_matching}

The phase-space drift is
\(b_\theta(x,v,t)=(v,a_\theta(x,v,t/T))\). Since \(v\) is an independent
phase-space coordinate, \(\nabla_x\cdot v=0\), and therefore
\[
\nabla_{(x,v)}\cdot b_\theta
=\nabla_v\cdot a_\theta.
\]
This identifies the velocity divergence of acceleration as the
compressibility term for the learned dynamics.

\paragraph{Path-error motivation.}
Let \(F_t^\star(x,v)=
\mathbb E[a_t^\star\mid x_t^\star=x,v_t^\star=v]\) be the marginal
reference acceleration field. Under sufficient regularity, the
reference and learned densities \(\mu_t^\star,\mu_t^\theta\) satisfy
their phase-space continuity equations. For
\(\eta_t=\mu_t^\theta-\mu_t^\star\), subtraction gives
\[
\partial_t\eta_t+\nabla_{(x,v)}\cdot(b_\theta\eta_t)
=\nabla_v\cdot\bigl[(F_t^\star-a_\theta)\mu_t^\star\bigr].
\]
The forcing term expands as
\[
(F_t^\star-a_\theta)\cdot\nabla_v\mu_t^\star
+\mu_t^\star
\bigl(\nabla_v\cdot F_t^\star-\nabla_v\cdot a_\theta\bigr).
\]
Thus, both acceleration and divergence mismatch contribute to the
path error, motivating divergence regularization alongside
acceleration matching~\citep{huang2025flowdivergence}.

\paragraph{Conditional continuation targets.}
Let \(\Delta=T-t>0\). Applying
Proposition~\ref{prop:full_boundary_bridge} and
Theorem~\ref{thm:terminal_velocity_projection} to the remaining interval
gives the current acceleration for a bridge from \((x,v)\) to position
\(y\):
\[
F_{\mathrm{free}}^\star(x,v,t;y)
=\frac{3}{\Delta^2}(y-x-\Delta v),
\]
when the terminal velocity is free. If the terminal velocity is fixed
at \(w\), the corresponding acceleration is
\[
F_{\mathrm{fix}}^\star(x,v,t;y,w)
=\frac{6}{\Delta^2}(y-x-\Delta v)-\frac{2}{\Delta}(w-v).
\]
Taking the velocity divergence with the terminal data held fixed gives
\[
\nabla_v\cdot F_{\mathrm{free}}^\star=-\frac{3d}{\Delta},
\qquad
\nabla_v\cdot F_{\mathrm{fix}}^\star=-\frac{4d}{\Delta}.
\]
Normalizing by \(\Delta/d\) yields the auxiliary objective
\[
\mathcal L_{\mathrm{ADM}}
=\mathbb E\left[
\left(
\frac{T-t}{d}\nabla_v\cdot
a_\theta(x_t^\star,v_t^\star,t/T)+c
\right)^2
\right],
\qquad
c=
\begin{cases}
3,&\text{latent }v_1,\\
4,&\text{observed }v_1.
\end{cases}
\]
The latent-\(v_1\) case is Eq.~\eqref{eqn_loss_adm}.

\paragraph{Conditional and marginal divergence.}
The constants above are exact for conditional continuation fields.
The marginal reference field averages over terminal data conditioned
on the current state. For example, in the latent-\(v_1\) case, let
\(m_t(x,v)=\mathbb E[x_1\mid x_t^\star=x,v_t^\star=v]\). Then
\[
F_t^\star(x,v)
=\frac{3}{\Delta^2}\bigl(m_t(x,v)-x-\Delta v\bigr),
\]
and, wherever differentiable,
\[
\nabla_v\cdot F_t^\star
=-\frac{3d}{\Delta}
+\frac{3}{\Delta^2}\nabla_v\cdot m_t(x,v).
\]
Accordingly, ADM regularizes the learned field toward the contraction
of the conditional bridge. Exact marginal-divergence matching would
also require the state dependence of the conditional terminal mean.
The same conditional--marginal distinction applies when terminal
velocities are observed.

\paragraph{Trace estimation.}
For low-dimensional data, we compute
\(\nabla_v\cdot a_\theta=\operatorname{Tr}(\partial a_\theta/\partial v)\)
directly by automatic differentiation. For high-dimensional data, we
use a Hutchinson estimator
\[
\nabla_v\cdot a_\theta
\approx
\epsilon^\top\frac{\partial a_\theta}{\partial v}\epsilon,
\qquad
\mathbb E[\epsilon\epsilon^\top]=I,
\]
with a standard Gaussian or Rademacher vector \(\epsilon\). The
quadratic form is evaluated using a vector--Jacobian product without
materializing the full Jacobian.

\subsection{Numerical Solvers}
\label{app:numerical_solvers}

Sampling integrates
\[
\dot z(t)=F_\theta(z(t),t),
\qquad
z=(x,v),
\qquad
F_\theta(z,t)=
\begin{pmatrix}
v\\a_\theta(x,v,t/T)
\end{pmatrix}.
\]
Let \(h=T/N\), \(t_k=kh\), and
\(a_k=a_\theta(x_k,v_k,t_k/T)\).

\paragraph{Heun solver.}
The predictor and corrected update are
\[
\widetilde z_{k+1}=z_k+hF_\theta(z_k,t_k),
\qquad
z_{k+1}=z_k+\frac{h}{2}
\left[
F_\theta(z_k,t_k)+F_\theta(\widetilde z_{k+1},t_{k+1})
\right].
\]

\paragraph{Taylor2 update.}
The update used in our comparisons is
\[
x_{k+1}=x_k+hv_k+\frac12h^2a_k,
\qquad
v_{k+1}=v_k+ha_k.
\]
Here \emph{Taylor2} denotes the second-order expansion of the position
update combined with an Euler velocity update. For a general learned
acceleration field, this scheme is not second-order accurate for the
full phase-space state.

\paragraph{Velocity Verlet update.}
We first predict the position and velocity:
\[
x_{k+1}=x_k+hv_k+\frac12h^2a_k,
\qquad
\overline v_{k+1}=v_k+ha_k.
\]
We then evaluate the acceleration at the predicted state and correct
the velocity:
\[
\widehat a_{k+1}
=a_\theta(x_{k+1},\overline v_{k+1},t_{k+1}/T),
\qquad
v_{k+1}=v_k+\frac{h}{2}(a_k+\widehat a_{k+1}).
\]
For velocity-dependent acceleration, this is an explicit
predictor--corrector variant of the Verlet update. At the next step,
\(a_{k+1}\) is evaluated at the corrected state
\((x_{k+1},v_{k+1})\).

\vspace{-1pt}
\section{Implementation Details}
\label{app:implementation_details}
\vspace{-1pt}

This appendix summarizes the implementation and evaluation settings
for the low-dimensional, PDE, and CIFAR-10 experiments.
We report both method-specific and solver-controlled inference
protocols, with solver choices and sampling budgets specified
for each experiment.

\subsection{Low-Dimensional Experiments}
\label{app:impl_toy}

For the low-dimensional datasets, all methods are evaluated with the same sampling budget of 100 steps and EMA weights.
For each dataset, we use \(n=2000\) target samples and \(n=2000\) generated samples to compute the sample-based \(W_2\) metric.
CFM, OT-CFM, OAT-FM, and CurveFlow are evaluated with Euler sampling, following their standard first-order sampling protocol.
HOFM is evaluated with the Taylor2 solver, following its original high-order setting.
For VTV-FM, we evaluate Heun, Taylor2, and Velocity Verlet, and report the best validation result in the main table.
The solver-specific results are reported separately in the ablation table.

\subsection{Shared Source Phase-Space and Training Setup}

\paragraph{Source phase-space prior.}
For the low-dimensional and CIFAR-10 experiments, we sample
\[
v_0 \sim \mathcal N(0,\sigma^2 I),
\]
independently of the source position \(x_0\).
Unless otherwise stated, we use \(\sigma=1\).
For static data, \(v_0\) is a modeling prior used to provide a
non-degenerate lifting from position space to phase space; it is not
interpreted as an observed or physical velocity.
The variational terminal-velocity closure itself does not depend on the
Gaussian assumption.

The resulting projected cost is
\[
C_T^\star(x_0,v_0,x_1)
=
\frac{3}{T^3}
\left\|
x_1-x_0-Tv_0
\right\|_2^2.
\]
When \(v_0=0\), this cost is proportional to the standard quadratic
OT cost, so their unregularized optimal couplings coincide.
The induced position trajectory follows a cubic time parameterization
rather than a constant-velocity linear bridge.
The exact constant-velocity trajectory is recovered only when
\(v_0=(x_1-x_0)/T\), for which the residual
\(x_1-x_0-Tv_0\) and the required acceleration vanish.

\paragraph{Loss coefficients and local consistency window.}
The full training objective is
\[
\mathcal L_{\mathrm{total}}
=
\mathcal L_{\mathrm{acc}}
+
\lambda_{\mathrm{va}}\mathcal L_{\mathrm{va}}
+
\lambda_{\mathrm{xv}}\mathcal L_{\mathrm{xv}}
+
\lambda_{\mathrm{ADM}}\mathcal L_{\mathrm{ADM}}.
\]
Unless otherwise stated, we use
\[
\lambda_{\mathrm{va}}
=
\lambda_{\mathrm{xv}}
=
\lambda_{\mathrm{ADM}}
=
0.1,
\]
with coefficient \(1\) on \(\mathcal L_{\mathrm{acc}}\).
For the two local consistency losses, the local time window is matched
to the actual fixed sampling step,
\[
\Delta t = \frac{T}{N},
\]
and is not tuned as an additional hyperparameter.

\subsection{PDE-Governed Field Experiments}
\label{app:impl_pde}

For PDE-governed field generation, all methods are evaluated with 100 sampling steps.
Unlike static image data, PDE data provide terminal derivative information through the PDE right-hand side or simulator outputs.
Therefore, VTV-FM directly uses the observed or derivable terminal velocity and does not apply terminal-velocity closure in this setting.
This corresponds to the observed/derivable-\(v_1\) regime described in the main text.

The baseline solver choices follow the same principle as in the low-dimensional experiments.
First-order FM variants use their standard first-order samplers, HOFM uses the Taylor2 solver, and VTV-FM is evaluated with phase-space solvers such as Heun, Taylor2, and Velocity Verlet.
Other evaluation settings, including sample size and metric computation, follow the low-dimensional experiment protocol unless otherwise specified.

\subsection{CIFAR-10 Experiments}
\label{app:impl_cifar10}

CIFAR-10 represents the high-dimensional missing-\(v_1\) regime,
where only terminal images are observed and target-side terminal
derivatives are unavailable.
Therefore, VTV-FM uses the variational terminal-velocity closure derived
in Section~\ref{sec:method} to instantiate the second-order bridge.

For the controlled cross-method comparison in
Table~\ref{tab:cifar10_solver_controlled}, we evaluate FM, I-CFM,
OT-CFM, OAT-FM, CurveFlow, and VTV-FM under the applicable shared
inference protocols.
VTV-FM uses acceleration matching only in this comparison, without the
local consistency or ADM regularizers.
Heun-100 provides the primary fixed-step solver-controlled comparison,
while Dopri5 provides a shared adaptive-solver comparison with measured
NFE.

Table~\ref{tab:cifar10_ablation} separately evaluates the effect of the
auxiliary objectives within VTV-FM under Velocity Verlet, Heun, and
Taylor2.
This separates the cross-method comparison from the contribution of
local consistency and acceleration-divergence matching.

\begin{figure*}[t]
    \centering
    \includegraphics[width=0.88\textwidth]{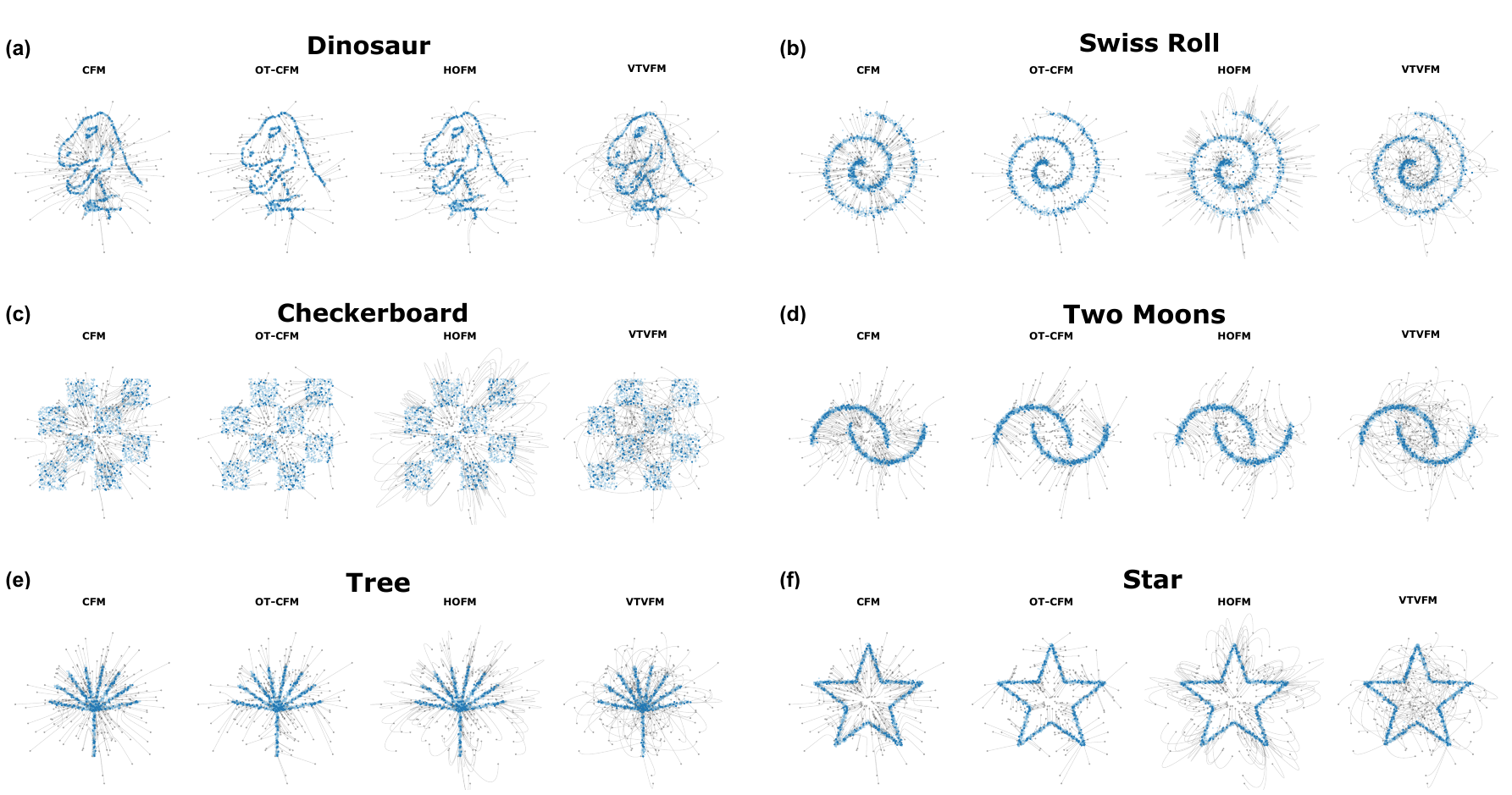}
    \caption{Transport trajectories on six low-dimensional datasets.
    Each panel compares \textsc{CFM}, \textsc{OT-CFM}, \textsc{HOFM},
    and \textsc{VTV-FM}.}
    \label{fig:toy_trajectory_grid}
\end{figure*}

\section{Additional Qualitative Visualizations}
\label{app:visualizations}

This appendix complements the quantitative results in
Section~\ref{sec:experiments} with transport trajectories and generated
samples.

\subsection{Low-Dimensional Trajectory Visualizations}
\label{app:toy_vis}

Figure~\ref{fig:toy_trajectory_grid} compares transport geometry on six
representative datasets, complementing the sample-based \(W_2\) results
in Section~\ref{sec:toy}.

\begin{figure}[!th]
    \centering
    \begin{minipage}{0.4\linewidth}
        \centering
        \includegraphics[width=\linewidth]{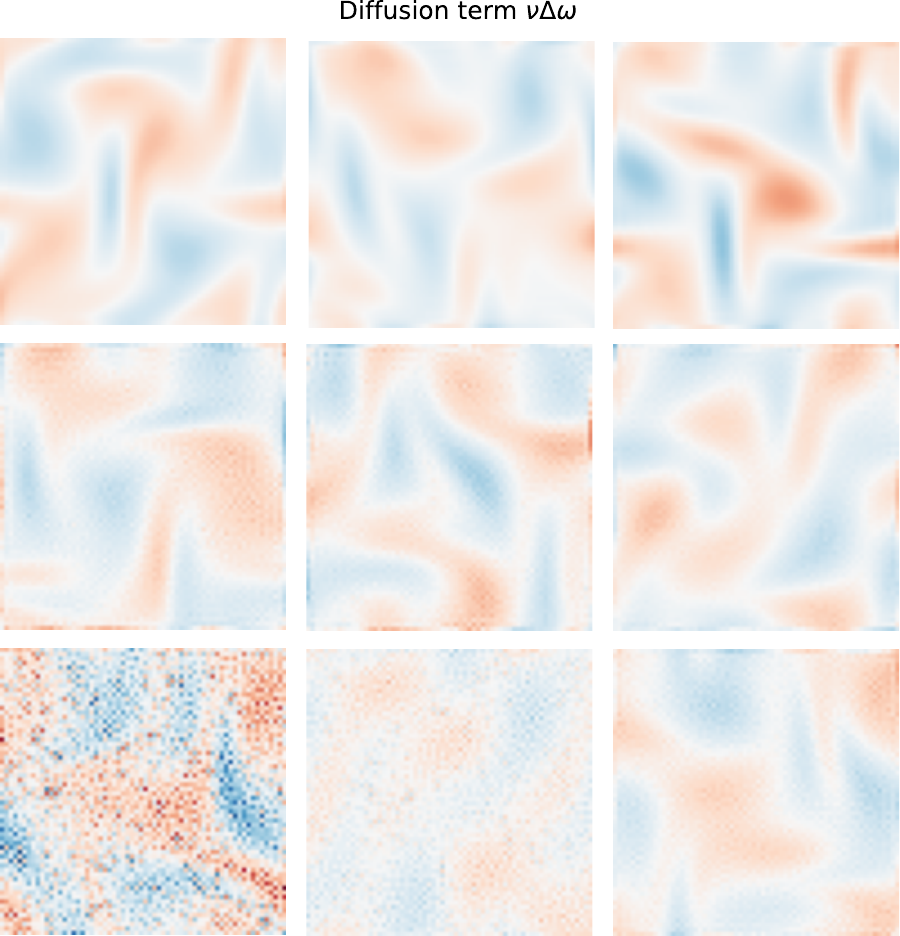}\\
        \small NS2D sample 1
    \end{minipage}
    \hfill
    \begin{minipage}{0.4\linewidth}
        \centering
        \includegraphics[width=\linewidth]{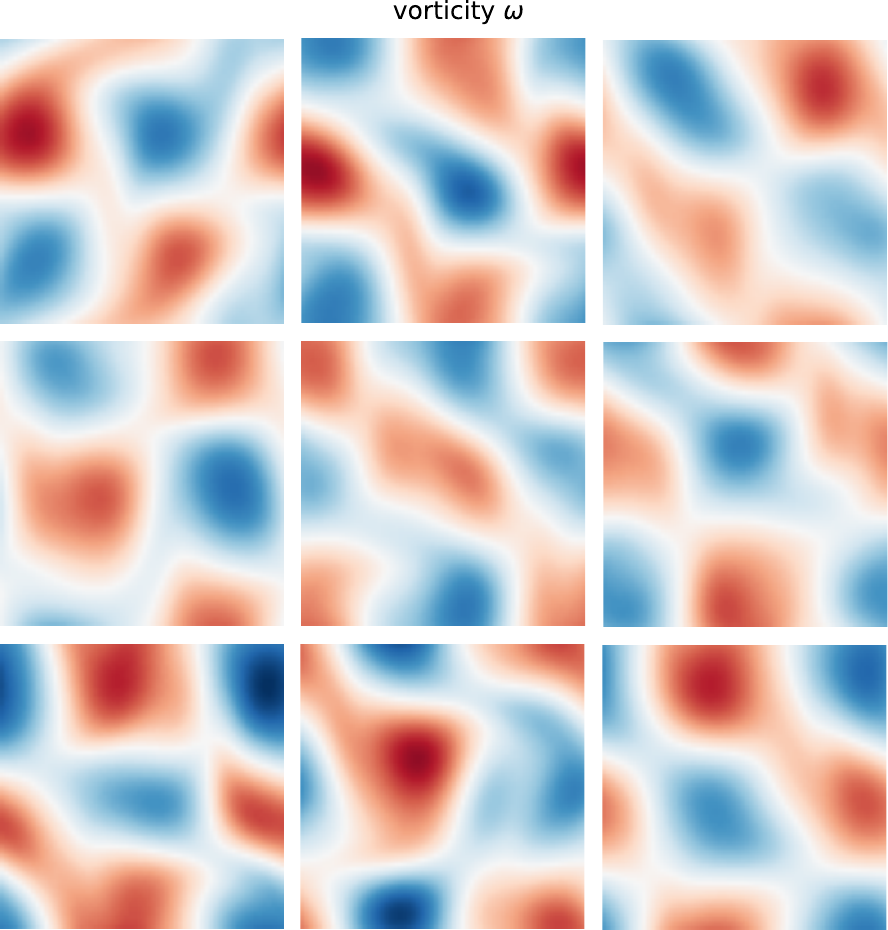}\\
        \small NS2D sample 2
    \end{minipage}

    \vspace{0.6em}
    \begin{minipage}{0.4\linewidth}
        \centering
        \includegraphics[width=\linewidth]{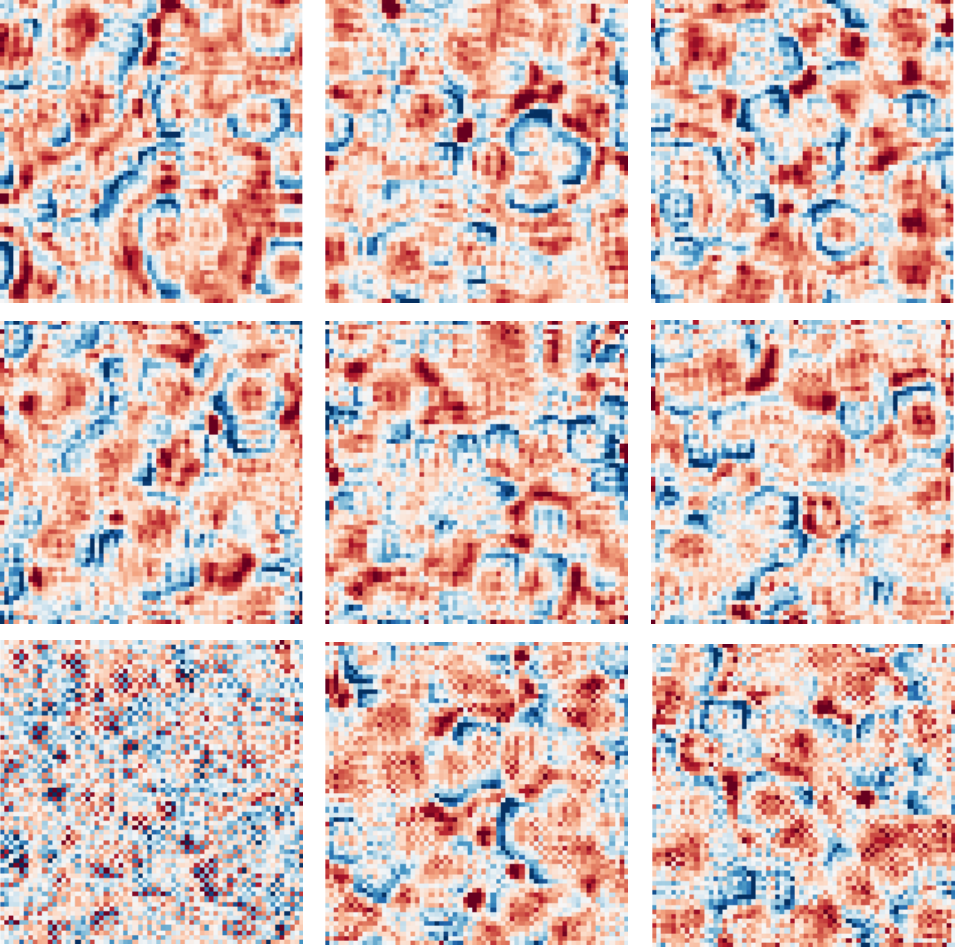}\\
        \small KS2D sample 1
    \end{minipage}
    \hfill
    \begin{minipage}{0.4\linewidth}
        \centering
        \includegraphics[width=\linewidth]{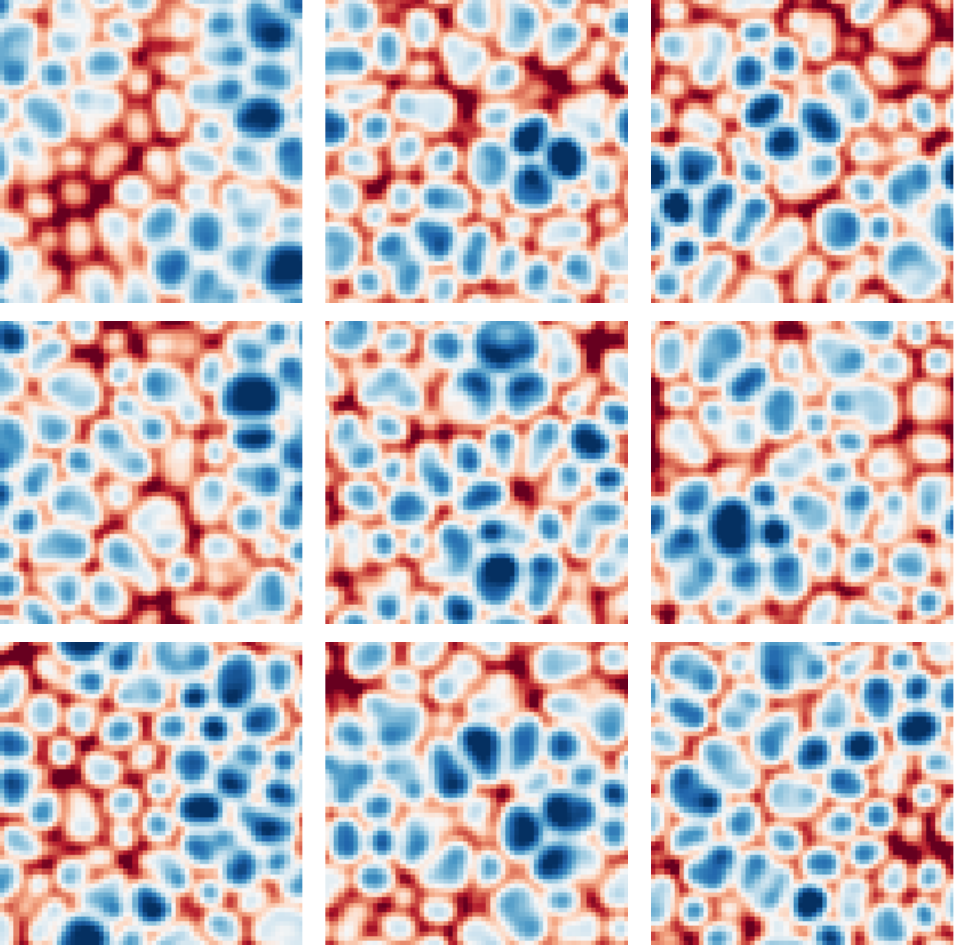}\\
        \small KS2D sample 2
    \end{minipage}
    \caption{NS2D and KS2D generation examples. Within each example,
    the top, middle, and bottom rows show real, VTV-FM, and OT-CFM
    samples, respectively.}
    \label{fig:pde_vis}
\end{figure}

\subsection{Additional PDE Visualization}
\label{app:pde_vis}

Figure~\ref{fig:pde_vis} compares real and generated fields for the
NS2D and KS2D experiments in Section~\ref{sec:pde}.

\subsection{Additional CIFAR-10 Samples}
\label{app:cifar10_samples}

Figure~\ref{fig:cifar10_samples} uses the best VTV-FM configuration in
Table~\ref{tab:cifar10_ablation}: the full objective, variational
terminal-velocity closure, and 100-step Velocity Verlet sampling.

\begin{figure}[!th]
    \centering
    \includegraphics[width=0.40\linewidth]{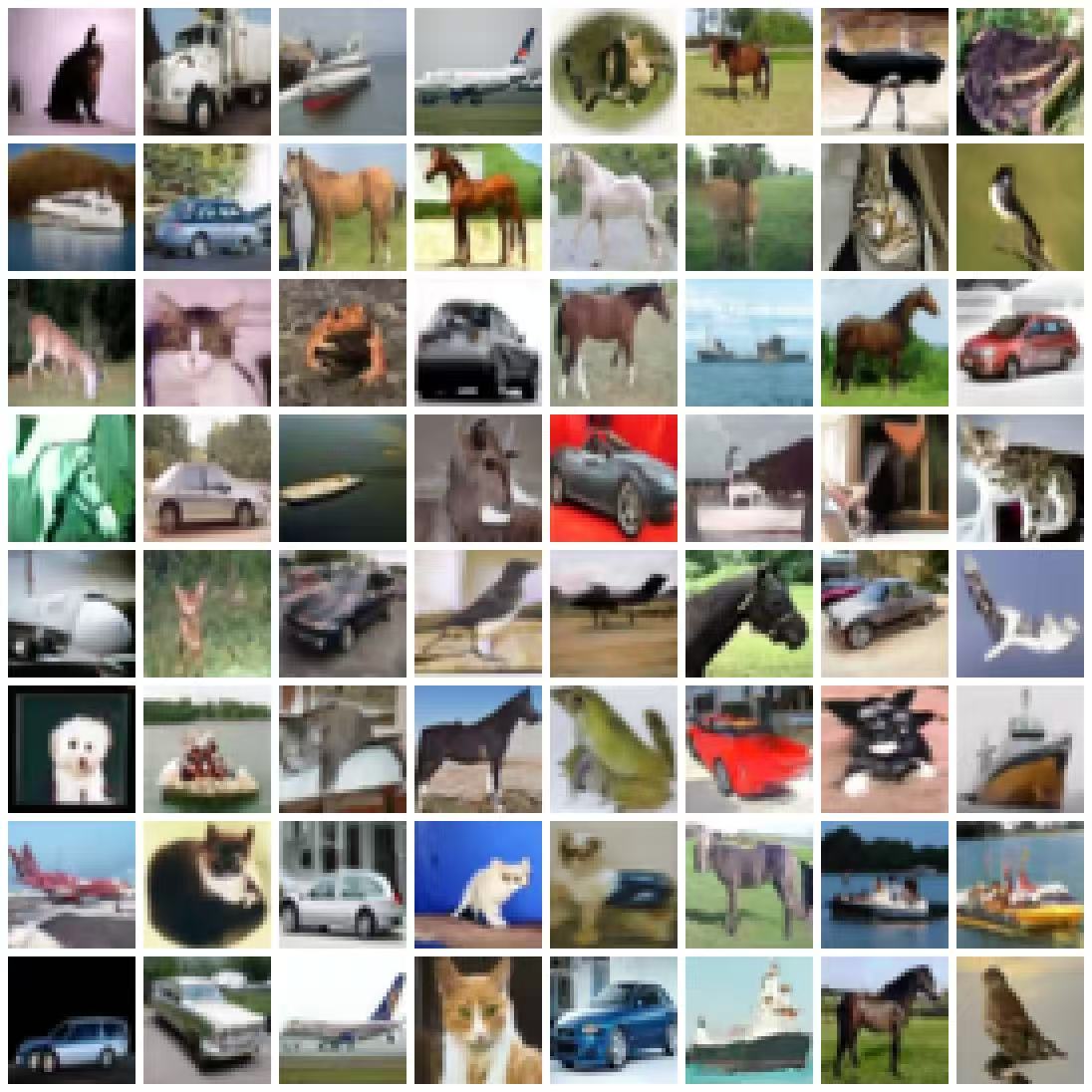}
    \caption{Qualitative CIFAR-10 samples generated by VTV-FM.}
    \label{fig:cifar10_samples}
\end{figure}

\subsection{Additional Video Visualization}
\label{app:video_vis}

Figure~\ref{fig:video_vis} provides qualitative examples on KTH, BAIR,
and Navier--Stokes temporal sequences. Terminal velocities are
approximated by finite differences between adjacent frames and used in
the full-boundary bridge. These examples illustrate the
observed/derivable-\(v_1\) construction on temporal data; they do not
constitute a quantitative video-generation benchmark.

\begin{figure}[!th]
    \centering
    \begin{minipage}{0.45\linewidth}
        \centering
        \includegraphics[width=\linewidth]{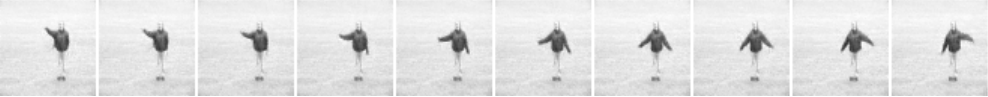}\\[-0.1em]
        {\small KTH sample 1}
    \end{minipage}
    \hfill
    \begin{minipage}{0.45\linewidth}
        \centering
        \includegraphics[width=\linewidth]{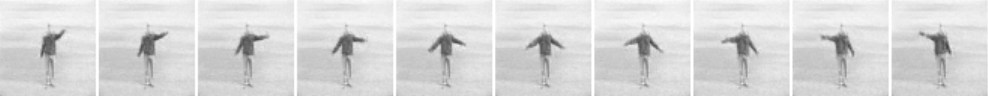}\\[-0.1em]
        {\small KTH sample 2}
    \end{minipage}

    \vspace{0.6em}
    \begin{minipage}{0.45\linewidth}
        \centering
        \includegraphics[width=\linewidth]{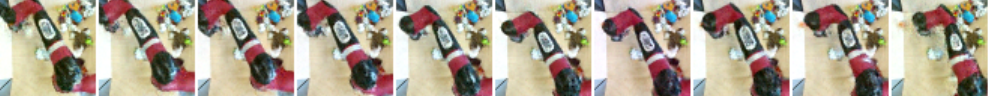}\\[-0.1em]
        {\small BAIR sample 1}
    \end{minipage}
    \hfill
    \begin{minipage}{0.45\linewidth}
        \centering
        \includegraphics[width=\linewidth]{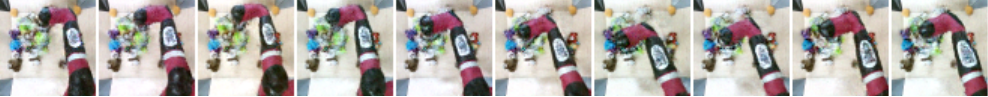}\\[-0.1em]
        {\small BAIR sample 2}
    \end{minipage}

    \vspace{0.6em}
    \begin{minipage}{0.48\linewidth}
        \centering
        \includegraphics[width=\linewidth,trim=0pt 0pt 0pt 22pt,clip]
        {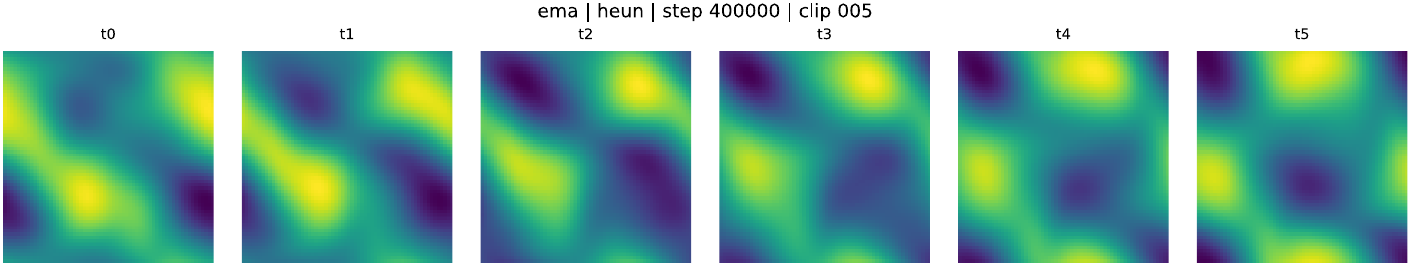}\\[-0.1em]
        {\small Navier--Stokes sample 1}
    \end{minipage}
    \hfill
    \begin{minipage}{0.48\linewidth}
        \centering
        \includegraphics[width=\linewidth,trim=0pt 0pt 0pt 22pt,clip]
        {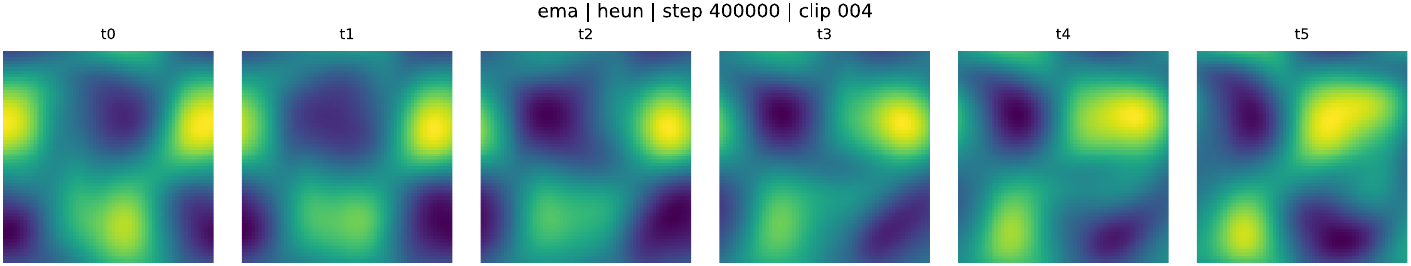}\\[-0.1em]
        {\small Navier--Stokes sample 2}
    \end{minipage}
    \caption{Qualitative temporal examples from KTH, BAIR, and
    Navier--Stokes sequences.}
    \label{fig:video_vis}
\end{figure}

\section{Additional Controlled Experiments}
\label{app:controlled_experiments}

This appendix provides additional controlled experiments that complement
the main results.
We examine loss and solver analysis on low-dimensional datasets, the source phase-space prior, terminal-velocity completion,
the consistency between endpoint pairing and bridge construction, and
computational efficiency and scalability.
We do not repeat the solver-controlled CIFAR-10 comparisons already
reported in Section~\ref{sec:cifar10}.

\subsection{Loss and Solver Analysis on Low-Dimensional Datasets}
\label{app:toy_ablation_solver}
\vspace{-7pt}
We provide additional ablation and solver analysis for VTV-FM on the low-dimensional datasets.
Table~\ref{tab:toy_ablation_solver} summarizes two studies.
First, we compare different training objectives: acceleration-only training, acceleration with local consistency, acceleration with acceleration-divergence matching (ADM), and a velocity--acceleration joint variant.
Second, we compare three fixed-step solvers: Heun, Taylor2, and Velocity Verlet, using the best-performing loss setting for each dataset.

The acceleration-only objective is already competitive, achieving an average sample-based \(W_2\) of \(0.1018\).
Adding local consistency gives the best average among the loss variants, reducing the mean \(W_2\) to \(0.1012\), while ADM gives a similar average of \(0.1013\).
This suggests that the main gain comes from the bridge-induced acceleration supervision, with auxiliary terms providing modest refinements.
For solver choice, all three fixed-step solvers perform similarly under the same sampling budget.
Heun obtains the best average \(W_2\) of \(0.1007\), while Taylor2 and Velocity Verlet remain close at \(0.1011\) and \(0.1013\), respectively.
These results indicate that VTV-FM is not tied to a particular solver; its gains mainly come from the second-order phase-space transport construction.

\begin{table}[h]
\centering
\caption{
Ablation and fixed-step solver analysis for VTV-FM on low-dimensional datasets.
Left: loss variants. Right: solver comparison under the best-performing loss setting for each dataset.
}
\label{tab:toy_ablation_solver}
\small
\setlength{\tabcolsep}{3.4pt}
\begin{tabular}{@{}lccccccc@{}}
\toprule
& \multicolumn{4}{c}{Loss ablation} 
& \multicolumn{3}{c}{Solver comparison} \\
\cmidrule(lr){2-5}
\cmidrule(l){6-8}
Dataset 
& Acc. 
& +Cons. 
& +ADM 
& Vel.+Acc. 
& Heun 
& Taylor2 
& Verlet \\
\midrule
Dinosaur 
& 0.1119 
& 0.1114 
& 0.1112 
& \textbf{0.1109} 
& 0.1113 
& 0.1121 
& \textbf{0.1109} \\
Swiss 
& 0.1112 
& \textbf{0.1081} 
& 0.1105 
& 0.1127 
& \textbf{0.1081} 
& 0.1092 
& 0.1105 \\
Checker 
& \textbf{0.1092} 
& 0.1094 
& 0.1098 
& 0.1092 
& \textbf{0.1092} 
& 0.1096 
& 0.1099 \\
Moons 
& 0.0835 
& 0.0826 
& 0.0807 
& \textbf{0.0790} 
& \textbf{0.0790} 
& 0.0803 
& 0.0808 \\
Lines 
& \textbf{0.1362} 
& 0.1366 
& 0.1380 
& 0.1387 
& 0.1370 
& \textbf{0.1362} 
& 0.1376 \\
Tree 
& 0.0741 
& 0.0739 
& \textbf{0.0730} 
& 0.0752 
& 0.0736 
& 0.0734 
& \textbf{0.0730} \\
Star 
& 0.0868 
& 0.0866 
& \textbf{0.0862} 
& 0.0877 
& 0.0869 
& 0.0867 
& \textbf{0.0862} \\
\midrule
Avg.
& 0.1018
& \textbf{0.1012}
& 0.1013
& 0.1019
& \textbf{0.1007}
& 0.1011
& 0.1013 \\
\bottomrule
\end{tabular}
\end{table}
\vspace{-7pt}

% ============================================================
\subsection{Source Phase-Space Prior and Sensitivity Analysis}
\label{app:source_prior_ablation}

For static datasets, we introduce a source velocity
\[
v_0 \sim \mathcal N(0,\sigma^2 I),
\]
sampled independently of the source position \(x_0\).
Unless otherwise stated, we use \(\sigma=1\).

The role of \(v_0\) is to define the source phase-space state required
by the second-order formulation.
For static data, it is a modeling variable rather than an observed
physical velocity, and its scale is not intended to be a finely tuned
performance parameter.

To evaluate sensitivity to this choice, we vary
\(\sigma\) while keeping the remaining training and evaluation settings
fixed.
Table~\ref{tab:v0_scale} reports the resulting average \(W_2\) over the
seven low-dimensional datasets.

\vspace{-7pt}
\begin{table}[!htbp]
\centering
\caption{
Sensitivity to the source-velocity scale, averaged over the seven
low-dimensional datasets.
}
\label{tab:v0_scale}
\small
\setlength{\tabcolsep}{5pt}
\renewcommand{\arraystretch}{0.95}
\begin{tabular}{c|ccccc}
\toprule
\(\sigma\) & 0 & 0.5 & 1.0 & 1.5 & 2.0 \\
\midrule
Avg. \(W_2\)
& 0.1283
& \textbf{0.1002}
& 0.1004
& 0.1011
& 0.1056 \\
\bottomrule
\end{tabular}
\end{table}

As shown in Table~\ref{tab:v0_scale}, performance remains similar over
the moderate nonzero range \(\sigma\in[0.5,1.5]\), indicating limited
sensitivity to the precise source-velocity scale.
In contrast, \(\sigma=0\) collapses the velocity variation in the source
phase-space distribution and gives substantially worse performance.
Thus, \(v_0\) primarily provides the source phase-space representation,
while the precise choice of a reasonable nonzero scale has only a
modest effect on performance.

% ============================================================
\subsection{Terminal-Velocity Ablation Study}
\label{app:terminal_velocity_ablation}

We directly compare the proposed variational terminal-velocity closure
\[
v_1^\star
=
\frac{3}{2T}(x_1-x_0)-\frac{1}{2}v_0
\]
with five alternative completion rules:
\emph{Zero}, \(v_1=0\);
\emph{Chord}, \(v_1=(x_1-x_0)/T\);
\emph{Random}, which uses a random direction with the same norm as
\(v_1^\star\);
\emph{Copy-\(v_0\)}, \(v_1=v_0\); and
\emph{Tangent}, which sets \(v_1\) along the local target-data tangent
estimated by one-dimensional PCA around \(x_1\), with its sign aligned
to \(v_1^\star\) and its norm matched to \(\|v_1^\star\|_2\).

We consider two settings.
In the \emph{fixed-pairing} setting, the projected-2OT source--target
pairs are held fixed and only the terminal-velocity rule is changed.
This isolates the effect of terminal-velocity completion and the
resulting bridge supervision.
In the \emph{end-to-end} setting, each alternative terminal-velocity
rule is substituted into the full-boundary cost, after which pairing
and training are recomputed; VTV-FM uses its projected cost and the
corresponding variational bridge.
Table~\ref{tab:v1_ablation} summarizes both settings, averaged over the
seven low-dimensional datasets.
\vspace{-7pt}
\begin{table}[!htbp]
\centering
\caption{
Terminal-velocity ablation averaged over the seven low-dimensional
datasets.
}
\label{tab:v1_ablation}
\small
\setlength{\tabcolsep}{7pt}
\renewcommand{\arraystretch}{0.95}
\begin{tabular}{lcccccc}
\toprule
Setting
& \(v_1^\star\)
& Zero
& Chord
& Random
& Copy-\(v_0\)
& Tangent \\
\midrule
Fixed pairing
& \textbf{0.1004}
& 0.1050
& 0.1020
& 0.1094
& 0.1062
& 0.1013 \\
End-to-end
& \textbf{0.1004}
& 0.1041
& 0.1008
& 0.1087
& 0.1054
& 0.1005 \\
\bottomrule
\end{tabular}
\end{table}
\vspace{-7pt}
As shown in Table~\ref{tab:v1_ablation}, with pairing fixed, the proposed
closure gives the lowest average error among all terminal-velocity rules.
The advantage remains in the end-to-end setting, where pairing and
training are recomputed for each completion rule.
These results show that the variational terminal-velocity completion
provides a robust bridge construction beyond simply changing the
endpoint pairing.

% ============================================================
\subsection{Pairing--Bridge Cross-Ablation Study}
\label{app:pairing_bridge_cross}

We further separate the effects of endpoint pairing and bridge
construction by crossing standard and projected OT pairing with
first- and second-order bridges.
Table~\ref{tab:pairing_bridge_cross} reports the four resulting
combinations.

\begin{table}[!htbp]
\centering
\caption{
Pairing--bridge cross-ablation on the low-dimensional datasets.
}
\label{tab:pairing_bridge_cross}
\small
\setlength{\tabcolsep}{4.5pt}
\renewcommand{\arraystretch}{0.95}
\begin{tabular}{lccc}
\toprule
Method & Bridge & Pairing & Avg. \(W_2\) \\
\midrule
OT-CFM
& Linear
& Standard OT
& 0.1077 \\
2OT-CFM
& Linear
& Projected 2OT
& 0.1109 \\
OT-VTV-FM
& Second-order
& Standard OT
& 0.1283 \\
2OT-VTV-FM
& Second-order
& Projected 2OT
& \textbf{0.1004} \\
\bottomrule
\end{tabular}
\end{table}

As shown in Table~\ref{tab:pairing_bridge_cross}, neither mismatched
cross-combination improves performance.
Projected 2OT does not improve the linear bridge, while combining the
second-order bridge with standard OT also degrades performance.
The strongest result is obtained when the projected pairing cost and
second-order bridge are used together.
This supports the interpretation that the improvement is associated
with the variational consistency among terminal-velocity closure,
pairing cost, and bridge construction, rather than with replacing any
single component in isolation.

% ============================================================
\subsection{Computational Efficiency and ImageNet-256 Scaling}
\label{app:efficiency_scaling}

\paragraph{Computational efficiency on low-dimensional benchmarks.}
We measure wall-clock inference time and peak allocated GPU memory using
the same hardware, evaluation batch size, and solver configuration.
VTV-FM uses acceleration matching only in this comparison.
Table~\ref{tab:compute_matched} reports the average \(W_2\), number of
function evaluations, wall-clock inference time, and peak GPU memory
on the seven low-dimensional benchmarks.

\begin{table}[!htbp]
\centering
\caption{
Compute-matched comparison on the low-dimensional benchmarks.
}
\label{tab:compute_matched}
\small
\setlength{\tabcolsep}{6pt}
\renewcommand{\arraystretch}{0.95}
\begin{tabular}{llcccc}
\toprule
Solver
& Method
& Avg. \(W_2\)
& NFE
& Time (ms)
& Peak memory (MB) \\
\midrule
Heun-100
& CFM
& 0.1085 & 200 & 858 & 62.30 \\
& OT-CFM
& 0.1079 & 200 & 861 & 62.30 \\
& VTV-FM
& \textbf{0.1020} & 200 & 859 & 62.37 \\
\midrule
Dopri5
& CFM
& 0.1069 & 72.8 & 486 & 62.58 \\
& OT-CFM
& 0.1066 & 42.2 & 306 & 62.59 \\
& VTV-FM
& \textbf{0.1018} & \textbf{38.0} & \textbf{242} & 63.66 \\
\midrule
Verlet-100
& VTV-FM
& \textbf{0.1009} & 200 & 687 & 62.96 \\
\bottomrule
\end{tabular}
\end{table}
\vspace{-7pt}
As shown in Table~\ref{tab:compute_matched}, under Heun-100, VTV-FM
improves average \(W_2\) with essentially unchanged wall-clock time and
peak memory relative to the first-order baselines.
Under Dopri5, it also achieves the lowest error with fewer function
evaluations and lower inference time.
Although VTV-FM augments the dynamical state from \(x\) to \((x,v)\),
the measured increase in peak memory remains small.

\paragraph{ImageNet-256 scaling.}
We additionally evaluate VTV-FM on ImageNet-256 to test scalability
beyond CIFAR-10.
We fine-tune the official SiT checkpoint~\citep{ma2024sit} and otherwise
follow the OAT-FM evaluation setting~\citep{yue2025oatfm}, including
CFG \(=1.5\), SDE sampling, and the official SiT FID evaluation protocol.
VTV-FM obtains an FID of \(1.97\), compared with \(2.06\) for SiT and
\(2.00\) for OAT-FM.
We primarily interpret this experiment as evidence that the proposed
construction scales beyond CIFAR-10, rather than as evidence that
minimum acceleration is universally superior to other high-order
objectives.

\section{VTV-FM as an Extension of OT-CFM}
\label{app:projected_kinetic_ot}

We relate the construction in Section~\ref{sec:method} to first-order
and kinetic optimal transport, then compare the minimum-acceleration
bridge with a minimum-jerk alternative.

\subsection{OT-CFM as First-Order Variational Flow Matching}
\label{app:first_order_otcfm}

For fixed endpoints, the least kinetic action is
\[
\inf_{\substack{\gamma(0)=x_0\\\gamma(T)=x_1}}
\int_0^T\|\dot\gamma(t)\|_2^2\,dt
=\frac{1}{T}\|x_1-x_0\|_2^2,
\]
attained by the linear bridge in Section~\ref{sec:preliminaries}.
At the distribution level, the Benamou--Brenier
formulation~\citep{benamou2000computational} gives
\[
\frac{1}{T}W_2^2(\rho_0,\rho_1)
=\inf_{\rho_t,u_t}
\int_0^T\int\|u_t(x)\|_2^2\,d\rho_t(x)\,dt,
\]
subject to
\[
\partial_t\rho_t+\nabla_x\cdot(\rho_tu_t)=0,
\qquad \rho_{t=0}=\rho_0,\quad\rho_{t=T}=\rho_1.
\]
Thus, the quadratic pairing cost and constant conditional velocity
target arise from the same first-order action.

\subsection{Second-Order Lift to Phase Space}
\label{app:second_order_kinetic_lift}

For phase-space laws
\(\mu,\nu\in\mathcal P_2(\mathbb R^d\times\mathbb R^d)\), the analogous
kinetic transport problem minimizes acceleration energy:
\[
\mathsf d_T^2(\mu,\nu)
=T\inf_{\mu_t,F_t}
\int_0^T\int\|F_t(x,v)\|_2^2\,d\mu_t(x,v)\,dt,
\]
subject to
\[
\partial_t\mu_t+v\cdot\nabla_x\mu_t+\nabla_v\cdot(F_t\mu_t)=0,
\qquad \mu_{t=0}=\mu,\quad\mu_{t=T}=\nu.
\]
Its static formulation~\citep{brigati2025otikin} is
\[
\mathsf d_T^2(\mu,\nu)
=\inf_{\pi\in\Pi(\mu,\nu)}
\int c_T(x,v,y,w)\,d\pi,
\qquad c_T=T\,C_T.
\]
Here \(C_T\) is the full-boundary cost in Eq.~\eqref{eqn_close_cost};
the factor \(T\) accounts for the normalization used above.
The minimizing bridge and its derivatives are given in
Eqs.~\eqref{eqn_hermite_bridge_x}--\eqref{eqn_hermite_bridge_v};
Appendix~\ref{app:full_boundary_bridge} proves
Proposition~\ref{prop:full_boundary_bridge}.

\subsection{Projection under Missing Terminal Velocity}
\label{app:projected_kinetic_cost}

When only the target position law \(\rho_1\) is prescribed, the target
velocity is optimized out. Theorem~\ref{thm:terminal_velocity_projection}
and Appendix~\ref{app:terminal_velocity_projection} show that the
resulting observable transport problem is
\[
\inf_{\eta\in\Pi(\mu_0,\rho_1)}
\int C_T^\star(x_0,v_0,x_1)\,d\eta.
\]
The minimizing lift assigns \(v_1=v_1^\star(x_0,v_0,x_1)\), with the
closure and projected cost given in
Eqs.~\eqref{eqn_lat_terminal_v}--\eqref{eqn_proj_cost}.
Their use in pairing and supervision is described in
Section~\ref{sec:minibatch_pairing} and
Appendix~\ref{app:minibatch_pairing}.
This gives the following correspondence:

\begin{center}
\small
\begin{tabular}{lccc}
\toprule
Method & Action & Endpoint cost & Conditional bridge\\
\midrule
OT-CFM
& \(\int_0^T\|\dot\gamma\|_2^2\,dt\)
& \(\|x_1-x_0\|_2^2/T\)
& Linear\\
VTV-FM
& \(\int_0^T\|\ddot\gamma\|_2^2\,dt\)
& \(C_T^\star(x_0,v_0,x_1)\)
& Projected cubic\\
\bottomrule
\end{tabular}
\end{center}

Both constructions derive the pairing cost and supervision from one
action; the second-order construction additionally determines the
unobserved terminal velocity.

\subsection{Bridge-Shape Ablation: Minimum Acceleration versus Minimum Jerk}
\label{app:min_jerk_ablation}

To assess the role of the cubic bridge, we replace the acceleration
action by
\[
\mathcal J_T[\gamma]=\int_0^T\|\gamma^{(3)}(t)\|_2^2\,dt.
\]
The alternative fixes \(\gamma(0)=x_0\), \(\dot\gamma(0)=v_0\),
and \(\gamma(T)=x_1\), imposes
\(\ddot\gamma(0)=\ddot\gamma(T)=0\), and leaves the terminal velocity
free. We call it \textsc{VTV-FM-Jerk-ZA}.
The Euler--Lagrange equation \(\gamma^{(6)}=0\), together with the
natural condition \(\gamma^{(4)}(T)=0\), gives the projected quintic
bridge below.

Let \(\tau=t/T\) and \(\delta=x_1-x_0-Tv_0\). Both bridges have the form
\[
x_q(\tau)=x_0+T\tau v_0+q(\tau)\delta,
\qquad
v_q(\tau)=v_0+\frac{q'(\tau)}{T}\delta,
\qquad
a_q(\tau)=\frac{q''(\tau)}{T^2}\delta,
\]
with
\[
q_{\mathrm{acc}}(\tau)=\frac32\tau^2-\frac12\tau^3,
\qquad
q_{\mathrm{jerk}}(\tau)=\frac52\tau^3-\frac{15}{8}\tau^4
+\frac38\tau^5.
\]
Here \(q_{\mathrm{acc}}\) follows by substituting
Eq.~\eqref{eqn_lat_terminal_v} into
Eq.~\eqref{eqn_hermite_bridge_x}.
The jerk bridge has terminal velocity and projected cost
\[
v_{1,\mathrm{jerk}}^\star
=\frac{15}{8T}(x_1-x_0)-\frac78v_0,
\qquad
J_T^\star=\frac{45}{T^5}\|\delta\|_2^2
=\frac{15}{T^2}C_T^\star.
\]
For fixed \(T\), these proportional costs have the same unregularized
optimal couplings. This equivalence extends to entropic OT if
the entropy coefficient is scaled with the cost:
\(\varepsilon_{\mathrm{jerk}}=(15/T^2)\varepsilon_{\mathrm{acc}}\).
Proportional costs alone do not ensure identical Sinkhorn couplings
at fixed \(\varepsilon\).

Table~\ref{tab:app_acc_vs_jerkza} compares the two variants using
acceleration matching only and the same acceleration-network
architecture. Their average \(W_2\) values are nearly identical, with
no consistent advantage across datasets. We retain the
minimum-acceleration bridge because it matches the second-order
variational formulation without imposing zero endpoint accelerations.

\begin{table}[!htbp]
\centering
\caption{Bridge-shape ablation using acceleration matching only.
\textsc{VTV-FM-Jerk-ZA} uses a minimum-jerk bridge with zero endpoint
accelerations.}
\label{tab:app_acc_vs_jerkza}
\setlength{\tabcolsep}{4.5pt}
\begin{tabular}{lcc}
\toprule
Dataset & \textsc{VTV-FM} & \textsc{VTV-FM-Jerk-ZA}\\
\midrule
Dinosaur & 0.1119 & \textbf{0.1113}\\
Swiss    & \textbf{0.1112} & 0.1120\\
Checker  & \textbf{0.1092} & 0.1096\\
Moons    & 0.0835 & \textbf{0.0829}\\
Lines    & 0.1362 & \textbf{0.1356}\\
Tree     & \textbf{0.0741} & 0.0745\\
Star     & 0.0868 & \textbf{0.0863}\\
\midrule
Average  & 0.1018 & \textbf{0.1017}\\
\bottomrule
\end{tabular}
\end{table}

\newpage

\end{document}